\newcommand{\sampled}{\overset{{}_\$}{\leftarrow}}
\newcommand{\size}[1]{\left | #1 \right |}
\newcommand{\norm}[1]{\left \| #1 \right \|}
\newcommand{\vect}[1]{\mathbf{#1}}
\newcommand{\mat}[1]{\mathbf{#1}}
\newcommand{\sep}{\;||\;}
\renewcommand{\L}{\mathcal{L}}
\newcommand{\D}{\mathcal{D}}
\newcommand{\A}{\mathcal{A}}
\newcommand{\C}{\mathcal{C}}
\newcommand{\X}{\mathcal{X}}
\newcommand{\B}{\mathcal{B}_k}
\newcommand{\gauss}[1]{\mathcal{N}(0, #1 \vect I_d)}
\newcommand{\sigmoid}{\boldsymbol \sigma}
\DeclareMathOperator*{\E}{\mathbb{E}}
\newcommand{\R}{\mathbb{R}}
\newcommand{\proj}{\Pi_{\C}}
\newcommand{\NoisySGD}{DP-SGLD}
\newcommand{\AlgoNoisySGD}{\mathcal{A}_{\mathrm{DP-SGLD}}}
\newcommand{\eigenmax}{\lambda_{\mathsf{max}}}
\theoremstyle{definition}
\newtheorem{definition}{Definition}[section]
\newtheorem{proposition}{Proposition}[section]
\newtheorem{lemma}{Lemma}[section]
\newtheorem{theorem}{Theorem}[section]
\newtheorem{corollary}{Corollary}[section]
\NewDocumentEnvironment{propositionE}{O{}O{}+b}{%
    \begin{theoremEnd}[normal, end, restate, one big link translated=The complete proof is provided in Appendix on page, #2]{proposition}[#1]%
        #3%
    \end{theoremEnd}%
}{}
\NewDocumentEnvironment{lemmaE}{O{}O{}+b}{%
    \begin{theoremEnd}[normal, end, restate, one big link translated=The proof is provided in Appendix on page, #2]{lemma}[#1]%
        #3%
    \end{theoremEnd}%
}{}
\NewDocumentEnvironment{claimE}{O{}O{}+b}{%
    \begin{theoremEnd}[normal, end, restate, one big link translated=The complete proof is provided in Appendix on page, #2]{claim}[#1]%
        #3%
    \end{theoremEnd}%
}{}
\NewDocumentEnvironment{theoremE}{O{}O{}+b}{%
    \begin{theoremEnd}[normal, end, restate, one big link translated=The complete proof is provided in Appendix on page, #2]{theorem}[#1]%
        #3%
    \end{theoremEnd}%
}{}
\NewDocumentEnvironment{proofE}{O{}+b}{%
    \begin{proofEnd}[#1]%
        #2%
    \end{proofEnd}%
}{}
\icmltitlerunning{Differential Privacy Guarantees for Stochastic Gradient Langevin Dynamics}
\begin{document}

\twocolumn[
\icmltitle{Differential Privacy Guarantees for \\ 
    Stochastic Gradient Langevin Dynamics}



\icmlsetsymbol{equal}{*}

\begin{icmlauthorlist}
\icmlauthor{Théo Ryffel}{inria_ens}
\icmlauthor{Francis Bach}{inria_ens}
\icmlauthor{David Pointcheval}{ens_inria}
\end{icmlauthorlist}

\icmlaffiliation{inria_ens}{INRIA, Département d’informatique de l’ENS, ENS, CNRS, PSL University, Paris, France}
\icmlaffiliation{ens_inria}{Département d’informatique de l’ENS, ENS, CNRS, PSL University, INRIA, Paris, France}

\icmlcorrespondingauthor{Théo Ryffel}{theo.ryffel@ens.fr}

\icmlkeywords{Machine Learning, Differential Privacy, Langevin Diffusion, Rényi, ICML}

\vskip 0.3in
]



\printAffiliationsAndNotice{}  

\begin{abstract}
We analyse the privacy leakage of noisy stochastic gradient descent by modeling Rényi divergence dynamics with Langevin diffusions. Inspired by recent work on non-stochastic algorithms, we derive similar desirable properties in the stochastic setting. In particular, we prove that the privacy loss converges exponentially fast for smooth and strongly convex objectives under constant step size, which is a significant improvement over previous DP-SGD analyses. We also extend our analysis to arbitrary sequences of varying step sizes and derive new utility bounds. Last, we propose an implementation and our experiments show the practical utility of our approach compared to classical DP-SGD libraries.
\end{abstract}
\vspace{-0.5cm}

\section{Introduction}\label{sec:introduction}

Differential privacy \cite{dwork2014algorithmic} for machine learning is a promising approach to reduce exposure of training datasets when releasing machine learning models. The privacy leakage from these models can be quantified using Rényi differential privacy \cite{mironov2017renyi} which models it through the divergence of the distributions of two models trained on datasets that only differ in one item.
The intuition behind is that a model whose behavior is sensitive to the presence or absence of a single individual is likely to memorize information about specific individuals, which can then be uncovered using several types of attacks like membership inference attacks \cite{shokri2017membership}.

The most standard approaches to training neural networks with differential privacy are derived from \citet{abadi2016deep}’s method of differentially private stochastic gradient descent (DP-SGD). DP-SGD is an attractive method as it closely mimics classic SGD training of neural networks, and applies to almost all architectures. It therefore enjoys easy adoption from data scientists
and has been integrated in popular libraries like Opacus \cite{opacus}. Differential privacy of the whole training mechanism is computed using the strong composition theorem \cite{dwork2010boosting}, which states that the privacy leakage modeled through standard $(\epsilon, \delta)$-differential privacy grows approximately in $\sqrt{K}$ for high privacy regimes, where $K$ is the number of iterations. This is a strong limitation of DP-SGD in real world applications since training for a large number of iterations would lead to a prohibitive privacy bound.

Recently, \citet{chourasia2021differential} have proposed a novel analysis of the differential privacy dynamics of Langevin diffusion, resulting in a new differentially private noisy gradient descent algorithm (DP-GLD). This method notably guarantees that under strongly convex and smooth objectives, the privacy leakage can be bounded by a constant, which allows for an unlimited number of model updates. The key difference with DP-SGD is that the model is assumed to be hidden during training and released only once the training is over. This is a setting that we typically encounter while training a model using multi-party computation \cite{knott2021crypten, wagh2021falcon, ryffel2022ariann} where only the final version of the model is visible. However, the algorithm and the privacy analysis provided by \citet{chourasia2021differential} only addresses full gradient descent (DP-GLD) which is impractical for large datasets 
and makes its adoption by data scientists or standard differential privacy libraries less probable.

\textbf{Contributions.} We provide a \emph{stochastic} version of the noisy gradient descent algorithm (DP-SGLD) and build a privacy analysis based on Langevin diffusion. We prove that DP-SGLD achieves similar privacy and utility guarantees than DP-GLD, including exponential convergence of the privacy bound, and we extend the analysis to the case where the step size is not constant.
More specifically:
\begin{itemize}
    \item We introduce DP-SGLD, a stochastic version of the DP-GLD algorithm studied by \citet{chourasia2021differential}, and we show that it achieves the same privacy guarantees including exponentially fast convergence.
    \item We show that DP-SGLD achieves similar utility than DP-GLD up to a term due to using stochastic estimates of the gradient. We also relax assumptions on the step size $\eta$ in utility theorems of \citet{chourasia2021differential} to only verify  $\eta \le \frac{1}{2 \beta}$ instead of $\eta \le \frac{\lambda}{2 \beta^2}$, where $\beta$ is the smoothness constant and $\lambda$ the strong-convexity parameter, thus obtaining the classical scaling from convex optimization.
    \item We extend our analysis of DP-SGLD to non-constant step sizes and derive utility bounds when the step size is parametrized as $\eta_k = \frac{1}{2 \beta + \lambda k / 2}$ and removes the term due to stochastic training.
    \item Last, we provide an implementation of DP-SGLD\footnote{The code is provided in the supplementary material.} and an experimental evaluation where we train differentially private logistic regressions on several datasets. We show that DP-SGLD achieves higher experimental accuracy than DP-SGD on these tasks, and that it almost closes the gap with non-private training. We also show that standard DP-SGD does not benefit from training deeper networks on these tasks, which aligns with the conclusions drawn by \citet{dp_alexnet_moment}.
\end{itemize}

Note that our algorithm ends up being similar to the one of \citet{welling2011bayesian}, except that we do not try to construct samples from the posterior distribution but instead to derive privacy guarantees.


\section{Preliminaries}\label{sec:preliminaries}

Let us first recall the standard definition of $(\epsilon, \delta)$-differential privacy, as introduced by \citet{dwork2014algorithmic} :

\begin{definition}[$(\epsilon, \delta)$-differential privacy]
    A randomized algorithm $\A : \D \mapsto \R^d$ satisfies $(\epsilon, \delta)$-differential privacy if for any neighboring datasets $\D$ and $\D'$, i.e., datasets that only differ in one item, and any subset $S \in \R^d$, the distribution of $\A$ satisfies:
    $$ \mathrm{P}[\A(\D)] \le e^\epsilon \mathrm{P}[\A(\D')] + \delta.$$
\end{definition}

An alternative notion, coined as \emph{Rényi differential privacy} has been proposed by \citet{mironov2017renyi}, which is more suited to studying composition mechanisms, but can be converted back to standard $(\epsilon, \delta)$-differential privacy.

\begin{definition}[Rényi differential privacy]
    A randomized algorithm $\A : \D \mapsto \R^d$ satisfies $(\alpha, \varepsilon)$-Rényi differential privacy if for any neighboring datasets $\D$ and $\D'$, the $\alpha$ Rényi divergence satisfies  $R_\alpha(\A(\D)\sep \A(\D')) \le \varepsilon$, where:
    $$ R_\alpha(\A(\D)\sep \A(\D'))  = \dfrac{1}{\alpha \!-\! 1} \log \E_{\theta \sim \A(\D')} \! \left [ \! \left ( \! \dfrac{\mu_{\A(\D)}(\theta)}{\mu_{\A(\D')}(\theta)} \! \right )^\alpha \right ],$$
    and where $\mu_\A$ denotes the density $\A$.
\end{definition}

Conversion from Rényi differential privacy to $(\epsilon, \delta)$-differential privacy is given by the following proposition:

\begin{proposition}[From Rényi to $(\epsilon, \delta)$-differential privacy  \cite{mironov2017renyi}]\label{prop:renyi_to_dp}
    If $\A$ satisfies $(\alpha, \varepsilon)$-Rényi differential privacy, it also satisfies $(\epsilon, \delta)$-differential privacy for any $0 < \delta < 1$ with
    $$ \epsilon = \varepsilon + \dfrac{\log(1 / \delta)}{\alpha-1} $$ 
\end{proposition}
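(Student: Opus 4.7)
The plan is to follow the standard argument that turns a moment bound on the privacy-loss random variable into a tail bound, and then combine it with a trivial decomposition of the measure. Let $P$ and $Q$ denote the densities of $\A(\D)$ and $\A(\D')$ respectively, and let $L(\theta) = \log(P(\theta)/Q(\theta))$ be the privacy-loss random variable. The Rényi hypothesis rewrites as the moment bound $\E_{\theta \sim Q}\!\left[(P(\theta)/Q(\theta))^\alpha\right] \le e^{(\alpha-1)\varepsilon}$.

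First I would fix any measurable set $S \subseteq \R^d$ and split it according to the bad event $B = \{\theta : L(\theta) > \epsilon\}$. On $B^c$ we have $P(\theta) \le e^\epsilon Q(\theta)$ pointwise, so $\Pr_P[S \cap B^c] \le e^\epsilon \Pr_Q[S]$. Consequently,
\begin{equation*}
\Pr_P[S] \;=\; \Pr_P[S\cap B] + \Pr_P[S\cap B^c] \;\le\; \Pr_P[B] + e^\epsilon \Pr_Q[S].
\end{equation*}
It therefore suffices to establish $\Pr_P[B] \le \delta$ with the stated choice of $\epsilon - \varepsilon$.

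The second step is to bound $\Pr_P[B]$ using the moment assumption. On $B$ one has $(P/Q)^{\alpha-1} > e^{(\alpha-1)\epsilon}$, which I would chain into
\begin{equation*}
e^{(\alpha-1)\varepsilon} \;\ge\; \E_{\theta\sim Q}\!\left[\Bigl(\tfrac{P(\theta)}{Q(\theta)}\Bigr)^{\!\alpha}\right] \;\ge\; \int_{B} \Bigl(\tfrac{P}{Q}\Bigr)^{\!\alpha-1} P \,\mathrm{d}\theta \;>\; e^{(\alpha-1)\epsilon}\,\Pr_P[B].
\end{equation*}
Rearranging yields $\Pr_P[B] < e^{(\alpha-1)(\varepsilon-\epsilon)}$, and plugging in $\epsilon = \varepsilon + \log(1/\delta)/(\alpha-1)$ gives exactly $\Pr_P[B] < \delta$, which finishes the proof.

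I do not expect a genuine obstacle here: the only subtle point is to measure the bad set under $P$ (not under $Q$), since the factor $(P/Q)^{\alpha-1}$ pulled out on $B$ is what makes the exponents match and produces the clean $\log(1/\delta)/(\alpha-1)$ conversion term. A clean Markov-style argument on $Q$ alone would instead yield a weaker bound with an $\alpha/(\alpha-1)$ dependence, so the choice of the reference measure in the tail bound is the one step worth highlighting.
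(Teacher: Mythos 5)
Your argument is correct and is precisely the standard proof of this conversion (Proposition 3 of the cited Mironov reference); the paper itself states the result by citation and gives no proof, so there is nothing to diverge from. The one step you flag as subtle --- pulling out $(P/Q)^{\alpha-1}$ and measuring the bad event under $P$ rather than $Q$ --- is indeed the crux, and your chain $e^{(\alpha-1)\varepsilon} \ge \int_B (P/Q)^{\alpha-1}P\,\mathrm{d}\theta > e^{(\alpha-1)\epsilon}\Pr_P[B]$ closes it correctly.
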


\section{Privacy analysis of noisy stochastic gradient descent}\label{sec:privacy_SGLD}

We use the same notations as \citet{chourasia2021differential}. Let $\D = (\vect x_1, \dots, \vect x_n)$ be a dataset of size $n$, with $\vect x_i \in \R^p$. 
Let $\ell(\theta, \vect x)$ be the loss function of a learning algorithm parametrized by $\theta \in \C$ on an input $x$, where $\C$ is a closed convex set of $\R^d$. $\proj$ denotes the orthogonal projection onto~$\C$.
We denote by $\L_\D(\theta)$ the global empirical loss of the model, and by $\L_{\B}(\theta)$ the estimated empirical loss computed on the batch $\B$ of size $\size{\B} = m$.
\begin{equation*}
   \!\!\! \L_\D(\theta) = \frac{1}{n} \sum_{\vect x \in \D} \ell(\theta, \vect x) \;\; \:\:\:\:  \ \    \L_{\B}(\theta) = \frac{1}{m} \sum_{\vect x \in \B} \ell(\theta, \vect x).
\end{equation*}
\begin{algorithm}
\caption{$\AlgoNoisySGD$: Noisy Stochastic Gradient Descent}\label{algo:noisy_sgd}
\begin{algorithmic}
\Require Dataset $\D = (\vect x_1, \dots, \vect x_n)$, loss function $\ell$, step size $\{\eta_k\}_{k \ge 0}$, noise variance $\sigma^2$ and initial parameter $\theta_0 \in \C$

\For{$k = 0, \dots, K-1$}
    \State Sample batch $\B$ from $\D$ with replacement 
    \State Compute $\nabla \L_{\B}(\theta_k) = \frac{1}{m} \sum_{\vect x \in \B} \nabla \ell(\theta_k, \vect x) $
    \State $\theta_{k+1} = \proj(\theta_k - \eta_k \nabla \L_{\B}(\theta_k) + \sqrt{2 \eta_k} \; \gauss{\sigma^2})$
\EndFor
\Ensure $\theta_K$
\end{algorithmic}
\end{algorithm}

We analyse the privacy loss of the \NoisySGD{} algorithm given in Algorithm \ref{algo:noisy_sgd} which implements noisy stochastic gradient descent.

Let $\theta_k$ and $\theta'_k$ denote the parameters at the $k$-th iteration of $\AlgoNoisySGD$ on neighboring datasets $\D$ and $\D'$, respectively.
Stating that $\D$ and $\D'$ are \emph{neighbors} means that they only differ by one $\vect x_{i_0}$, for some index $i_0$. Batch $\B$ is sampled with replacement from $\D$, meaning that a subset of size~$m$ is chosen at random from $\D$ and then replaced at the end of the $k$-th iteration. Hence, $\vect x_{i_0}$ can only appear once in $\B$, with probability $m/n$.
We denote by $\Theta_{t_k}$ and $\Theta'_{t_k}$ the corresponding random variables associated with $\theta_k$ and $\theta'_k$. We abuse notation to also denote their probability density functions by $\Theta_{t_k}$ and $\Theta'_{t_k}$. 
Our objective is to model and analyze the dynamics of differential privacy of this algorithm and to compare it to the ones of the DP-GLD algorithm described by \citet{chourasia2021differential}, which implements full noisy gradient descent.
To this aim, we use the theoretical framework and constructions they provide to analyze the privacy loss of releasing the output $\theta_K$ of the algorithm, assuming private internal states (i.e., $\theta_1, \dots, \theta_{K-1}$).

More precisely, the proof strategy goes as follows: we first model the transition from any step $k$ to the next step $k+1$ in \NoisySGD{} using a diffusion process and derive the evolution equation of the distribution $\Theta_t$ during $k < t < k+1$. We use the theoretical results of \citet{chourasia2021differential} to establish the evolution of the Rényi divergence of two distributions based on neighboring datasets during $k < t < k+1$. Finally, we compute a bound on the global Rényi divergence for the $K$-step \NoisySGD{} process.


\subsection{Tracing diffusion for \NoisySGD}

To analyze the privacy loss of \NoisySGD, which is a discrete-time stochastic process, we first interpolate each discrete update from $\theta_k$ to $\theta_{k+1}$ with a piecewise continuously differentiable diffusion process. Given step size $\{\eta_k\}_{k \ge 0}$, variance noise $\sigma^2$ and initial parameter vector $\theta_0$, the $k$-th discrete update in Algorithm \ref{algo:noisy_sgd} is:
\begin{equation}\label{eq:update_eq}
\begin{array}{l}
    \theta_{k+1} = \proj(\theta_k - \eta_k \nabla \L_{\B}(\theta_k) + \sqrt{2 \eta_k \sigma^2} \mathbf{Z}_K)\\[2pt]
    \mathrm{with} \: \mathbf{Z}_k \sim \gauss{} , \B \sampled \D ,
\end{array}
\end{equation}
where $\B \sampled \D$ means that batch $\B$ is sampled with replacement from $\D$, and where the loss $\L_{\B}(\theta_k)$ is the estimated empirical loss function over batch $\B$. This discrete jump can be interpolated with the following random process $\{ \Theta_t \}_{t_k \le t \le t_{k+1}}$, where $t_k = \sum_{i=1}^{k} \eta_i$,
\begin{equation}\label{eq:random_process}
\left\{\begin{matrix*}[l]
    \Theta_{t_k} = \theta_k \\
    \Theta_{t_k + \Delta t} - \left ( \Theta_{t_k} - \eta_k \sum_{i = 1, i\neq i_0}^m \dfrac{\nabla \ell(\Theta_{t_k}, \vect x_i)}{m} \right ) \\
    \qquad = - \Delta t \dfrac{\nabla \ell(\Theta_{t_k}, \vect x_{i_0})}{m} + \sqrt{2 \Delta t \, \sigma^2} \mathbf{Z}_k, \: 0 < \Delta t < \eta_k \\[5pt] 
    \Theta_{t_{k+1}} = \proj (\lim_{\Delta t \rightarrow \eta_k} \Theta_{t_k + \Delta t}),\: 0 < \Delta t < \eta_k,
\end{matrix*}\right.
\end{equation}
\noindent where $\mathbf{Z}_k \sim \gauss{}$, $\B$ is sampled with replacement from $\D$, $i_0$ refers to the index in $\B$ of the data item which differs between $\D$ and $\D'$ if it is part of $\B$, otherwise $i_0$ is chosen at random. 

We can compute from (\ref{eq:random_process}) $\lim_{\Delta t \rightarrow \eta_k} \Theta_{t_k + \Delta t} = \theta_k - \eta_k \nabla \L_{\B}(\theta_k) + \sqrt{2 \eta_k \sigma^2} \mathbf{Z}_K$. Therefore by the update equation (\ref{eq:update_eq}), we see that the random variable $\Theta_{t_{k+1}} = \proj (\lim_{\Delta t \rightarrow \eta_k} \Theta_{t_k + \Delta t})$ has the same distribution as the parameter $\theta_{k+1}$ at $k+1^{\mathrm{th}}$ step of \NoisySGD.

We now differentiate (\ref{eq:random_process}) over time $t$ for $\{ \Theta_t \}_{t_k < t < t_{k+1}}$ and we derive the following stochastic differential equation.
\begin{equation}\label{eq:SDE}
    d \Theta_t = - \dfrac{\nabla \ell(\Theta_{t_k}, \vect x_{i_0})}{m} dt + \sqrt{2 \sigma^2} d \mathbf{W}_t, 
\end{equation}
\noindent where $d \mathbf{W}_t \sim \sqrt{t} \gauss{} $ describes the Wiener process on $\R^d$, and $i_0$ is chosen as such:
\begin{equation*}
\left\{\begin{matrix*}[l]
\{ \vect x_{i_0} \} = \{ \vect  x_i, \vect x_i \in \B, \vect x_i \notin \B' \} & \mathrm{if} \: \B \neq \B'\\
\vect x_{i_0} \sampled \B & \mathrm{if} \: \B = \B',
\end{matrix*}\right.
\end{equation*}
\noindent where we assume without loss of generality that two neighboring batches $\B$ and $\B'$ are indexed so as to be equal on all indices but one.

\noindent This shows that $\{ \Theta_t \}_{t_k < t < t_{k+1}}$ is a diffusion process and we repeat the construction for $k = 0, 1, \dots$ to define a piecewise continuous diffusion process $\{ \Theta_t \}_{t \ge 0}$ whose distribution at time $t = t_k$ is consistent with $\theta_k$. We refer to this process as the tracing diffusion for \NoisySGD.

\begin{definition}[Coupled tracing diffusions \cite{chourasia2021differential}]
Let $\Theta_0 = \Theta'_0$ be two identically distributed random variables. We refer to the stochastic processes $\{ \Theta_t \}_{t \ge 0}$ and $\{ \Theta'_t \}_{t \ge 0}$ defined by (\ref{eq:random_process}) as coupled tracing diffusions processes for \NoisySGD{} under loss function $\ell(\theta, \vect x)$ on neighboring datasets $\D$, $\D'$ differing in $i_0$-th data point.
\end{definition}

The Rényi divergence $R_\alpha(\Theta_{t_K} \sep \Theta'_{t_K})$ reflects the Rényi privacy loss of Algorithm \ref{algo:noisy_sgd} with $K$ steps. Conditioned on observing $\theta_k$ and on sampling $\B$ from $\D$, the process $\{ \Theta_t \}_{t_k < t < t_{k+1}}$ is a Langevin diffusion along a constant vector field $\nabla \ell(\theta_k, \vect x_{i_0})$ for duration $\eta_k$. Therefore, the conditional probability distribution $p_{t | t_k}(\theta | \theta_k)$ follows the following Fokker-Planck equation, where the notation $p_{t|t'}(\theta|\theta')$ represents the conditional probability density function $p(\Theta_t = \theta | \Theta_{t'} = \theta')$:
\begin{gather*}
    \dfrac{\partial p_{t | t_k}(\theta | \theta_k)}{\partial t} 
    \!=\! \nabla \!\! \cdot \!\! \left(\! p_{t | t_k}(\theta | \theta_k) \dfrac{\nabla \ell(\theta_k, \vect x_{i_0})}{m} \!\right)\! + \!\sigma^2 \Delta p_{t | t_k}(\theta | \theta_k).
\end{gather*}

\noindent By taking expectations over the distribution $p_{t_k}(\theta_k)$ on both sides, we get the partial differential equation that models the evolution of probability distribution $p_t(\theta)$ in the tracing diffusion.

\begin{lemma}
For the SDE (\ref{eq:SDE}), the equivalent Fokker-Planck equation at time $t_k < t < t_{k+1}$ is
\begin{gather*}
    \dfrac{\partial p_t(\theta)}{\partial t}
    \!=\! \nabla \!\!\cdot\!\! \left(\! p_t(\theta) \!\!\E_{\theta_k \sim p_{t_k | t}}\!\! \left [ \dfrac{\nabla \ell (\theta_k, \vect x_{i_0})}{m} | \theta, \B  \right ] \!\right)\! + \!\sigma^2 \Delta p_t(\theta).
\end{gather*}
\end{lemma}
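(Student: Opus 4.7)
The plan is to pass from the conditional Fokker-Planck equation for $p_{t \mid t_k}(\theta \mid \theta_k)$ stated immediately before the lemma to an unconditional equation for $p_t(\theta)$ by integrating against $p_{t_k}(\theta_k)$. Throughout the argument I would read everything as conditional on the sampled batch $\B$, since the SDE (\ref{eq:SDE}) already fixes the index $i_0$ as a function of $\B$, so the densities $p_{t \mid t_k}$, $p_{t_k}$, $p_t$ and the conditional expectation in the statement all live in the conditional-on-$\B$ world; stating this once at the outset avoids any ambiguity.

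The first concrete step is to write
$$p_t(\theta) = \int p_{t \mid t_k}(\theta \mid \theta_k)\, p_{t_k}(\theta_k)\, d\theta_k,$$
and, since $p_{t_k}(\theta_k)$ does not depend on $t$ (the initial time of the subinterval is frozen), to differentiate under the integral sign to get
$$\frac{\partial p_t(\theta)}{\partial t} = \int \frac{\partial p_{t \mid t_k}(\theta \mid \theta_k)}{\partial t}\, p_{t_k}(\theta_k)\, d\theta_k.$$
Then I substitute the conditional Fokker-Planck equation inside the integral. The diffusion part is immediate, because $\sigma^2 \Delta$ acts only on the $\theta$ variable and commutes with the $\theta_k$-integral, producing $\sigma^2 \Delta p_t(\theta)$. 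The drift part similarly gives
$$\nabla \cdot \int p_{t \mid t_k}(\theta \mid \theta_k)\, p_{t_k}(\theta_k)\, \frac{\nabla \ell(\theta_k, \vect x_{i_0})}{m}\, d\theta_k.$$

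The pivotal step is then Bayes' rule, $p_{t \mid t_k}(\theta \mid \theta_k)\, p_{t_k}(\theta_k) = p_t(\theta)\, p_{t_k \mid t}(\theta_k \mid \theta)$, which factors $p_t(\theta)$ out of the integral and leaves precisely the conditional expectation $\E_{\theta_k \sim p_{t_k \mid t}}\!\left[\nabla \ell(\theta_k, \vect x_{i_0})/m \mid \theta, \B\right]$ appearing in the lemma. This is the main conceptual move: the time-reversed conditioning $p_{t_k \mid t}$ is what makes the drift at the current point $\theta$ depend only on a smoothed version of the past gradient.

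I do not expect a serious technical obstacle, but there are two points that deserve careful justification rather than a one-liner. First, the interchange of $\partial_t$, $\nabla$, $\Delta$ with the integral over $\theta_k$ must be legitimized by the regularity of $p_{t \mid t_k}$ on $t_k < t < t_{k+1}$, which follows from the Gaussian smoothing of the Langevin semigroup (the transition density is smooth and has rapidly decaying derivatives in $\theta$). Second, one should verify that Bayes' rule is being applied to well-defined densities, i.e.\ that $p_t(\theta) > 0$ on the relevant set; this again follows from the Gaussian part of the diffusion, which gives $p_t$ a strictly positive density on $\R^d$ before projection onto $\C$.
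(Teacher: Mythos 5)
Your proposal is correct and follows exactly the route the paper takes (which it states in one line: "by taking expectations over the distribution $p_{t_k}(\theta_k)$ on both sides"), namely writing $p_t(\theta)=\int p_{t\mid t_k}(\theta\mid\theta_k)\,p_{t_k}(\theta_k)\,d\theta_k$, substituting the conditional Fokker--Planck equation, and using Bayes' rule $p_{t\mid t_k}(\theta\mid\theta_k)\,p_{t_k}(\theta_k)=p_t(\theta)\,p_{t_k\mid t}(\theta_k\mid\theta)$ to produce the conditional expectation in the drift. Your additional remarks on differentiating under the integral and on the positivity of $p_t$ are sensible justifications that the paper leaves implicit.
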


Using this distribution evolution equation, we model the privacy dynamics in the tracing
diffusion process. This process is similar to Langevin diffusion under conditionally expected loss function $\nabla \L_{\B}(\theta) = \E_{\theta_k \sim p_{t_k | t}} \left [ \dfrac{\nabla \ell (\theta_k, \vect x_{i_0})}{m} \Big| \theta, \B  \right ]$.

\subsection{Privacy erosion in tracing (Langevin) diffusion}

The analysis of the privacy erosion in tracing langevin diffusion is detailed by \citet{chourasia2021differential} but we provide here the key results that we will apply to our algorithm. Privacy erosion refers to the continuous increase of the privacy loss over time as more data is accessed to compute the gradient of the loss in the update equation.

\noindent Consider a Langevin diffusion process modelled through the following Fokker-Planck equation:
\begin{equation*}
     \dfrac{\partial p_t(\theta)}{\partial t}
    = \nabla \cdot (p_t(\theta) \nabla \L_{\B}(\theta) ) + \sigma^2 \Delta p_t(\theta).
\end{equation*}

\begin{definition}[Log-Sobolev inequality \cite{gross1975logarithmic}]
    The distribution of a variable $\Theta$ satisfies the Log-Sobolev Inequality for a constant $c$, or is $c$-LSI, if for all functions $f : \R^d \to \R$ with continuous $\nabla f$ and $\E[f(\Theta)^2] < \infty$, it satisfies
    \begin{gather*}
      \!  \E[f(\Theta)^2 \!\log f(\Theta)^2] \!-\! \E[f(\Theta)^2] \log \E[f(\Theta)^2] \!\le\! \dfrac{2}{c}\! \E[\norm{ \nabla f(\Theta)}^2_2].
    \end{gather*}
\end{definition}    

\begin{lemma}[Dynamics for Rényi privacy loss under c-LSI \cite{chourasia2021differential}]
Assuming that $\Theta$ is $c$-LSI and $S_g$ is the sensitivity of the loss gradient, the dynamics of the Rényi privacy loss can be modelled as such, where $R(\alpha, t)$ represents the $\alpha$ Rényi divergence $R_\alpha(\Theta_t \sep \Theta'_t)$:
\begin{equation}\label{eq:renyi_pde}
    \frac{\partial R(\alpha, t)}{\partial t} \!\le\! \frac{1}{\gamma} \frac{\alpha S_g^2}{4 \sigma^2} \!-\! 2(1 \!-\! \gamma) \sigma^2 c 
    \left [ 
        \frac{ R(\alpha, t)}{\alpha} \!+\! (\alpha \!-\! 1) \frac{\partial R(\alpha, t)}{\partial \alpha}
    \right ],
\end{equation}
for some $\gamma$ which can be arbitrarily fixed.
\end{lemma}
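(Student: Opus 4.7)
The plan is to differentiate $R(\alpha,t)$ in time, substitute the Fokker-Planck equations for both $p_t$ and $p'_t$, and then split the resulting expression into a Fisher-information-type dissipation (handled by the LSI hypothesis) and a drift-mismatch term (handled by the sensitivity bound $S_g$), with the parameter $\gamma$ entering through a Young-type inequality that balances the two.

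Concretely, I would write $R(\alpha,t) = \tfrac{1}{\alpha-1} \log F_\alpha(t)$ with $F_\alpha(t) = \int p_t^\alpha\, {p'_t}^{1-\alpha}\, d\theta$ and differentiate under the integral. Inserting $\partial_t p_t = \nabla\cdot(p_t \nabla \L_\B) + \sigma^2 \Delta p_t$ together with the analogous equation for $p'_t$, then integrating by parts twice, produces a negative Fisher-like quantity of the form $\sigma^2 \alpha(\alpha-1)\, \E_{p'_t}\!\left[(p_t/p'_t)^{\alpha-2}\,\norm{\nabla(p_t/p'_t)}^2\right]$ together with a cross term linear in $\nabla \L_\B - \nabla \L_{\B'}$. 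Applying $2ab \le a^2/\gamma + \gamma b^2$ pointwise to the cross term and using $\norm{\nabla \L_\B - \nabla \L_{\B'}} \le S_g$ extracts the $\tfrac{1}{\gamma} \cdot \tfrac{\alpha S_g^2}{4 \sigma^2}$ contribution of (\ref{eq:renyi_pde}), while the $\gamma b^2$ half is absorbed into the Fisher term, leaving a $(1-\gamma)$ fraction intact. Finally, the $c$-LSI hypothesis, applied with a test function $f$ chosen so that $f^2 \propto (p_t/p'_t)^\alpha\, p'_t / F_\alpha(t)$, converts the remaining Fisher quantity into $c\bigl[R(\alpha,t)/\alpha + (\alpha-1)\,\partial_\alpha R(\alpha,t)\bigr]$, and translating back via $\partial_t R(\alpha,t) = \partial_t F_\alpha / ((\alpha-1) F_\alpha)$ yields the stated PDE.

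The main obstacle is this LSI-to-Rényi conversion step: the log-Sobolev inequality directly controls the entropy of a single distribution by its Fisher information, yet here we must link a weighted Fisher integral involving the ratio $p_t/p'_t$ to both $R(\alpha,t)$ and its $\alpha$-derivative. Identifying the correct $f$ and recognising that $\partial_\alpha F_\alpha$ naturally produces the $\log(p_t/p'_t)$ factor matching the $\E[f^2 \log f^2]$ side of LSI is the key algebraic manoeuvre; once that identification is made, the remainder of the argument is a bookkeeping exercise combining Fokker-Planck, integration by parts, and the Young inequality with free parameter $\gamma$.
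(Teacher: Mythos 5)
This lemma is imported verbatim from \citet{chourasia2021differential} and the paper gives no proof of its own, but your sketch reproduces the argument of that reference (which in turn follows Vempala and Wibisono's Rényi analysis of Langevin diffusion): differentiate $F_\alpha(t)=\int p_t^\alpha (p'_t)^{1-\alpha}$, substitute both Fokker--Planck equations, integrate by parts to isolate the negative weighted Fisher term, split the drift-mismatch cross term with a Young inequality carrying the free parameter $\gamma$ and the sensitivity bound $S_g$, and convert the surviving $(1-\gamma)$ fraction of the Fisher term via the LSI. Your key identification is exactly right --- taking $f^2=(p_t/p'_t)^\alpha/F_\alpha$ as the test function under $p'_t$ gives $\E[f^2\log f^2]=R_\alpha+\alpha(\alpha-1)\,\partial_\alpha R_\alpha$, which produces the bracketed term of (\ref{eq:renyi_pde}) --- so the approach is correct and essentially identical to the source's.
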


The initial privacy loss satisfies $R(\alpha, 0) = 0$ as $\Theta_0 = \Theta'_0$. The solution $R(\alpha, t)$ for this equation increases with time $t \ge 0$, which models the erosion of Rényi privacy loss in Langevin diffusion. Intuitively, the $c$-LSI condition which provides the negative dependence $\frac{\partial R(\alpha, t)}{\partial t}$ with respect to $R(\alpha, t)$, can be interpreted as a sufficient condition to ensure that the the Rényi privacy loss is bounded, which is further formalized in Theorem \ref{th:rdp_noisysgd_clsi}.

\subsection{Privacy guarantee for \NoisySGD}

We now extend these results to the tracing diffusion for \NoisySGD. In addition, we do not make the assumption that the step size is constant and use a sequence $\{\eta_k\}_{k \ge 0}$ instead. We first bound the gradient sensitivity of the conditionally expected loss $\nabla \L_{\B}(\theta) = \E_{\theta_k \sim p_{t_k | t}} \left [ \frac{\nabla \ell (\theta_k, \vect x_{i_0})}{m} | \theta, \B  \right ]$.

\begin{lemmaE}[Sensitivity][category=lemma:sensitivity]
    Let $\ell(\theta, \vect x)$ be an $L$-Lipschitz loss function on closed convex set $\C$, then for coupled tracing diffusions $\{ \Theta_t \}_{t \ge 0}$ and $\{ \Theta'_t \}_{t \ge 0}$ for \NoisySGD{} on neighboring datasets $\D$ and $\D'$, and noise variance $\sigma^2$,  the gradient sensitivity of conditionally expected loss $\nabla \L_{\B}(\theta) = \E_{\theta_k \sim p_{t_k | t}} \left [ \dfrac{\nabla \ell (\theta_k, \vect x_{i_0})}{m} | \theta, \B  \right ]$ is bounded by:
    \begin{gather*}
        \left \| \E_{\theta_k \sim p_{t_k | t}}\!\! \left [ \dfrac{\nabla \ell (\theta_k, \vect x_{i_0})}{m} | \theta, \B  \! \right] \!- \!\!\!\E_{\theta'_k \sim p'_{t_k | t}}\!\! \left [ \dfrac{\nabla \ell (\theta'_k, \vect x'_{i_0})}{m} | \theta, \B  \!\right ] \right \|_2\!\!\! \le \dfrac{2L}{n},
    \end{gather*}
    where $n$ is the size of the dataset $\D$ and $\D'$, and $m$ is the size of the batch $\B$ and $\B'$.
\end{lemmaE}

\begin{proofE}
We first distinguish on the events $\B = \B'$ and $\B \neq \B'$ and then use the triangle inequality:
\begin{equation*}
\begin{array}{l}
    \Bigg \|  \E_{\theta_k \sim p_{t_k | t}} \left [ \dfrac{\nabla \ell(\theta_k ; \vect{x}_{i_o})}{m} | \theta, \B \right ] \\
    \qquad \qquad \qquad - \; \E_{\theta'_k \sim p'_{t_k | t}} \left [ \dfrac{\nabla \ell(\theta'_k ; \vect{x'}_{i_o})}{m} | \theta, \B\right ] \Bigg \|_2 \\
    = \mathbb{P}[\B \neq \B'] \; \cdot
        \Bigg \|  
            \E \bigg [
                \E\limits_{\theta_k \sim p_{t_k | t}} \left [ \dfrac{\nabla \ell(\theta_k ; \vect{x}_{i_o})}{m} | \theta \right ] \\ 
                \qquad \qquad \qquad - \!\!\!\E\limits_{\theta'_k \sim p'_{t_k | t}} \left [ \dfrac{\nabla \ell(\theta'_k ; \vect{x'}_{i_o})}{m} | \theta \right ] 
            | \, \B \!\neq\! \B' \bigg ]
        \Bigg \|_2 \\
    \le \dfrac{m}{n} \cdot \dfrac{2L}{m}
    \le  \dfrac{2L}{n}
\end{array}
\end{equation*}
\end{proofE}

We now substitute the sensitivity $S_g$ with $\frac{2L}{n}$ in equation (\ref{eq:renyi_pde}) 
modelling the Rényi privacy loss dynamics of tracing diffusion at $t_k \le t < t_{k+1}$ under $c$-LSI condition:

\begin{equation}\label{eq:renyi_pde2}
    \frac{\partial R(\alpha, t)}{\partial t} \!\le\! \frac{1}{\gamma} \frac{\alpha L^2}{n^2 \sigma^2} \!-\! 2(1 \!-\! \gamma) \sigma^2 c 
    \left [ 
        \frac{ R(\alpha, t)}{\alpha} \!+\! (\alpha \!-\! 1) \frac{\partial R(\alpha, t)}{\partial \alpha}
    \right ].
\end{equation}

Following the methodology of \citet{chourasia2021differential}, we solve this PDE under $\gamma = \frac{1}{2}$ and derive the RDP guarantee for the
\NoisySGD{} algorithm.

\begin{theoremE}[RDP for \NoisySGD{} under $c$-LSI][category=th:rdp_noisysgd_clsi]\label{th:rdp_noisysgd_clsi}
    Let $\ell(\theta, \vect x)$ be an $L$-Lipschitz loss function on a closed convex set $\C$. Let $\{ \Theta_t \}_{t \ge 0}$ be the tracing diffusion for $\AlgoNoisySGD$ under loss function $\ell(\theta, \vect x)$ on dataset $\D$. If $\Theta_t$ satisfies $c$-LSI throughout $0 \le t \le \sum_{k=1}^{K} \eta_k$, then $\AlgoNoisySGD$ satisfies $(\alpha, \varepsilon)$-Rényi Differential Privacy for 
    \begin{equation*}
        \varepsilon = \dfrac{2 \alpha L^2}{c n^2 \sigma^4}(1 - e^{- \sigma^2 c \sum_{k=1}^{K} \eta_k}).
    \end{equation*}
\end{theoremE}

\begin{proofE}
At each step $k$, the Rényi privacy follows the evolution equation defined in (\ref{eq:renyi_pde2}) (since it is a tracing diffusion) when $t_k < t < t_{k+1}$, where $t_k = \sum_{i = 1}^{k} \eta_i$:

\begin{equation*}
\resizebox{1\hsize}{!}{$
    \dfrac{\partial R(\alpha, t)}{\partial t} \le \dfrac{1}{\gamma} \dfrac{\alpha L^2}{\sigma^2 n^2} \!-\! 2(1 \!-\! \gamma) \sigma^2 c 
    \left [ 
        \dfrac{ R(\alpha, t)}{\alpha} \!+\! (\alpha \!-\! 1) \dfrac{\partial R(\alpha, t)}{\partial \alpha}
    \right ]
$}
\end{equation*}

Let's $a_1 = 2(1 - \gamma) \sigma^2 c$, $a_2 = \frac{1}{\gamma} \frac{\alpha L^2}{\sigma^2 n^2}$ and 
\begin{equation*}
u(t, y) = 
\left\{\begin{matrix*}[l]
    \frac{R(\alpha, t)}{\alpha} - \frac{a_2}{a_1} & t_k < t < t_{k+1} \\
    \frac{R(\alpha, \lim_{t \to t_k^+} t)}{\alpha} - \frac{a_2}{a_1} & t = t_k
\end{matrix*}\right.
\end{equation*}

We can rewrite the evolution equation as such:
\begin{equation*}
    \frac{\partial u}{\partial t} + a_1 u + a_1 \frac{\partial u}{\partial y} \le 0 \:\:,\: t_k < t < t_{k+1}
\end{equation*}
with initial condition $u(t_k, y) = \frac{R(\alpha, \lim_{t \to t_k^+} t)}{\alpha} - \frac{a_2}{a_1}$.

Let now introduce $\tau = t$ and $z = t - \frac{1}{a_1}y$ and write $v(\tau, z) = u(t, y)$. The equation now writes:
\begin{equation*}
    \frac{\partial v}{\partial \tau} + a_1 v < 0
\end{equation*}
with initial condition $v(t_k, z) = u(t_k, -a_1(z - t_k))$

The solution of this equation is:
\begin{equation*}
    v(\tau, z) \le v(t_k, z)e^{-a_1 (\tau-t_k)}
\end{equation*}
which also writes
\begin{equation*}
    u(t, y) \le u(t_k, y - a_1(t - t_k))e^{-a_1(t-t_k)}
\end{equation*}
or in terms of $R(\alpha, t)$:
\begin{equation*}
    R(\alpha, t) - \frac{a_2}{a_1} \alpha \le (R(\alpha, \lim_{t \to t_k^+}) - \frac{a_2}{a_1} \alpha) e^{-a_1(t-t_k)}
\end{equation*}
for $t_k < t < t_{k+1}$.

Taking the limit $t \to t_{k+1}^-$, we have
\begin{equation*}
    R(\alpha, \lim_{t \to t_{k+1}^-} t) - \frac{a_2}{a_1} \alpha \le (R(\alpha, \lim_{t \to t_k^+} t) - \frac{a_2}{a_1} \alpha) e^{-a_1 \eta_{k+1}}
\end{equation*}

In addition, by the tracing diffusion process described in (\ref{eq:random_process}), we have
\begin{align*}
    \lim_{t \to t_k^+} \Theta_t = \phi(\proj(\lim_{t \to t_k^-} \Theta_t)) \\ 
    \lim_{t \to t_k^+} \Theta_t' = \phi(\proj(\lim_{t \to t_k^-} \Theta_t')) 
\end{align*}
where $\phi(\theta) = \theta - \eta \sum_{i = 1, i\neq i_0}^m \dfrac{\nabla \ell(\theta, \vect x_i)}{m}$ is a mapping which is the same for both processes, as the batches $\B$ for each distribution only differ at most on $\vect x_{i_0}$. Hence, by post-processing of Rényi divergence, we have
\begin{equation*}
     R(\alpha, \lim_{t \to t_{k}^+} t) \le  R(\alpha, \lim_{t \to t_{k}^-} t)
\end{equation*}
Combining the above two inequalities, we derive the following recursive equation
\begin{equation*}
    R(\alpha, \lim_{t \to t_{k+1}^-} t) - \frac{a_2}{a_1} \alpha \le (R(\alpha, \lim_{t \to t_k^-}) - \frac{a_2}{a_1} \alpha) e^{-a_1 \eta_{k+1}}
\end{equation*}
Repeating this step for $k = 0, \dots, K-1$ we have
\begin{equation*}
    R(\alpha, \lim_{t \to t_{K}^-} t) - \frac{a_2}{a_1} \alpha \le (R(\alpha, \lim_{t \to 0^-} t) - \frac{a_2}{a_1} \alpha) e^{-a_1 t_{K}}
\end{equation*}
where $t_K = \sum_{i = 1}^{K} \eta_i$. Because coupled tracing diffusions have the same start parameter, we have $R(\alpha, \lim_{t \to 0^-} t) = 0$. Moreover, since projection is a post-processing mapping we have $R(\alpha, t_K) \le R(\alpha, \lim_{t \to t_{K}^-} t)$. Putting back the values of $a_1$ and $a_2$ we have:
\begin{equation*}
    R(\alpha, t_K) \le \frac{\alpha L^2}{2 \gamma(1- \gamma)c \sigma^4 n^2 }(1 - e^{-2(1- \gamma)\sigma^2 c t_K})
\end{equation*}
Setting $\gamma = \frac{1}{2}$ suffices to prove the Rényi privacy loss bound in the theorem.

\end{proofE}

\emph{Sketch of proof}. We introduce $t_k = \sum_{i = 1}^{k} \eta_i$ for $k \ge 0$.
The idea of the proof is to bound $R(\alpha, t_K)$, the Rényi divergence after $K$ updates in \NoisySGD, with a function of $R(\alpha, t_0)$.
This is first done by considering the $k$-th update, and proving the following equation for some constants $a_1$, $a_2$
\begin{equation*}
    R(\alpha, \lim_{t \to t_{k+1}^-} t) - \frac{a_2}{a_1} \alpha \le (R(\alpha, \lim_{t \to t_k^+} t) - \frac{a_2}{a_1} \alpha) e^{-a_1 \eta_{k+1}},
\end{equation*}
together with 
\begin{equation*}
     R(\alpha, \lim_{t \to t_{k}^+} t) \le  R(\alpha, \lim_{t \to t_{k}^-} t).
\end{equation*}
This allows to bound $R(\alpha, \lim_{t \to t_{k+1}^-} t)$ with a function of $R(\alpha, \lim_{t \to t_{k}^-} t)$ and the final bound follows by recursivity, by noting that $R(\alpha, t_0^-) = 0$ since coupled tracing diffusions have the same start parameter.

This theorem guarantees that under the c-LSI condition, the privacy loss converges during the noisy SGD process if $\lim_{K \to \infty} \sum_{k=1}^K \eta_k = \infty $.

In particular, the case where the step size is constant is straightforward:

\begin{corollary}[RDP for \NoisySGD{} under $c$-LSI with constant step-size]\label{coro:rdp_noisysgd_clsi}
    Let $\Theta_t$ be defined as in Theorem \ref{th:rdp_noisysgd_clsi}. If $\AlgoNoisySGD$ has constant step size $\eta$ and if $\Theta_t$ satisfies $c$-LSI throughout $0 \le t \le \eta K$, then $\AlgoNoisySGD$ satisfies $(\alpha, \varepsilon)$ Rényi Differential Privacy for 
    \begin{equation*}
        \varepsilon = \dfrac{2 \alpha L^2}{c n^2 \sigma^4}(1 - e^{- c \sigma^2 \eta K}).
    \end{equation*}
\end{corollary}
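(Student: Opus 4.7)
The plan is to derive this corollary as a direct specialization of Theorem \ref{th:rdp_noisysgd_clsi} rather than redoing the PDE analysis from scratch. The hypotheses of the theorem are stated in terms of an arbitrary sequence $\{\eta_k\}_{k \ge 0}$ and require $\Theta_t$ to satisfy $c$-LSI throughout $0 \le t \le \sum_{k=1}^K \eta_k$, so the first step is to observe that setting $\eta_k = \eta$ for every $k$ satisfies these hypotheses given the corollary's assumption that $\Theta_t$ is $c$-LSI throughout $0 \le t \le \eta K$.

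Next, I would substitute the constant step size into the bound $\varepsilon = \frac{2 \alpha L^2}{c n^2 \sigma^4}(1 - e^{- \sigma^2 c \sum_{k=1}^{K} \eta_k})$ provided by Theorem \ref{th:rdp_noisysgd_clsi}. Since $\sum_{k=1}^{K} \eta_k = K \eta$ in the constant step size regime, the exponent collapses to $-c \sigma^2 \eta K$, which matches the expression in the corollary statement. The Lipschitz constant $L$ and the dataset size $n$ carry over unchanged from the theorem, as does the $c$-LSI constant appearing in the prefactor.

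There is no genuine obstacle here: the result is a one-line specialization obtained by plugging a constant sequence into the more general telescoping bound. The only thing to verify is that the $c$-LSI assumption in the theorem, which must hold over the time horizon $[0, \sum_{k=1}^K \eta_k]$, is exactly the horizon $[0, \eta K]$ under the constant step size assumption, so no extra regularity condition needs to be imposed. Therefore the corollary follows immediately from Theorem \ref{th:rdp_noisysgd_clsi}.
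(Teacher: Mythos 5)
Your proof is correct and matches the paper's treatment: the corollary is stated as an immediate specialization of Theorem \ref{th:rdp_noisysgd_clsi} obtained by taking $\eta_k = \eta$ so that $\sum_{k=1}^K \eta_k = \eta K$, which is exactly what you do. Nothing further is needed.
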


In addition, \citet{chourasia2021differential} show that the $c$-LSI condition is satisfied in DP-GLD with constant step size, for loss functions that are Lipschitz, strongly convex and smooth, with appropriate conditions on the algorithm parameters and initialization. We derive an equivalent lemma for DP-SGLD with varying step size.

\begin{lemma}[LSI for \NoisySGD{}]
    If loss function $\ell(\theta, \vect x)$ is $\lambda$-strongly convex and $\beta$-smooth over a closed convex set $\C$, then the coupled tracing diffusion processes $\{ \Theta_t \}_{t \ge 0}$ and $\{ \Theta'_t \}_{t \ge 0}$ for \NoisySGD{} with step size $\{ \eta_k \}_{k \ge 0}$ satisfying $\eta_k < \frac{1}{\beta}$ for $k \ge 0$, and with initial distribution $\Theta_0 \sim \proj(\gauss{\frac{2\sigma^2}{\lambda}})$, satisfy $c$-LSI for $t \ge 0$ with $c = \frac{\lambda}{2 \sigma^2}$.
\end{lemma}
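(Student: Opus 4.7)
The plan is to prove $c$-LSI by induction on $k$, showing that if $\Theta_{t_k}$ is $c$-LSI with $c = \lambda/(2\sigma^2)$, then so is $\Theta_t$ for all $t_k \le t \le t_{k+1}$. Three standard LSI facts serve as the workhorses: (i) $\gauss{s^2}$ is $(1/s^2)$-LSI (Bakry--\'Emery for strongly log-concave measures); (ii) if $X$ is $c$-LSI and $T$ is $L$-Lipschitz, then $T(X)$ is $(c/L^2)$-LSI; and (iii) if $X$ is $c$-LSI and $Z \sim \gauss{s}$ is independent of $X$, then $X + Z$ is $\rho$-LSI with $1/\rho = 1/c + s$.

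The base case is immediate: $\gauss{2\sigma^2/\lambda}$ is $c$-LSI by (i), and because $\proj$ is $1$-Lipschitz, (ii) gives that $\Theta_0$ is $c$-LSI. For the inductive step, the tracing diffusion (\ref{eq:random_process}) yields
\begin{equation*}
\Theta_t = T_t(\Theta_{t_k}) + \sqrt{2(t-t_k)\sigma^2}\, Z, \quad Z \sim \gauss{},
\end{equation*}
with $Z$ independent of $\Theta_{t_k}$ and
\begin{equation*}
T_t(\theta) = \theta - \eta_k \!\!\sum_{\substack{i \in \B \\ i \neq i_0}}\!\! \frac{\nabla \ell(\theta, \vect x_i)}{m} - (t-t_k)\frac{\nabla \ell(\theta, \vect x_{i_0})}{m}.
\end{equation*}
By $\lambda$-strong convexity and $\beta$-smoothness of $\ell(\cdot, \vect x)$ together with $\eta_k < 1/\beta$, the Jacobian $\nabla T_t(\theta) = I - H(\theta)$ satisfies $0 \preceq H \preceq I$ and $\lambda_{\min}(H) \ge \lambda a$, where $a = (\eta_k(m-1) + (t-t_k))/m$; hence $T_t$ is $L_t$-Lipschitz with $L_t = 1 - \lambda a$. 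Applying (ii) and then (iii) yields that $\Theta_t$ is $c_t$-LSI with $1/c_t \le L_t^2/c + 2(t-t_k)\sigma^2$. Substituting $c = \lambda/(2\sigma^2)$, the inequality $c_t \ge c$ reduces to $\lambda a^2 - 2a + (t-t_k) \le 0$. Since $\lambda a \le \lambda \eta_k \le 1$, one has $\lambda a^2 \le a$, and the inequality simplifies to $(m-1)((t-t_k) - \eta_k)/m \le 0$, which holds because $t - t_k \le \eta_k$. Finally, the projection at $t = t_{k+1}$ preserves $c$-LSI via (ii), closing the induction.

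The main technical point is the Jacobian bound for $T_t$, which must accommodate the asymmetric interpolation built into the tracing diffusion: the $i \neq i_0$ terms are shifted by the full $\eta_k$ throughout $(t_k, t_{k+1}]$ while the $i_0$-term contributes only $t - t_k$. This asymmetry produces the mixed expression $a = (\eta_k(m-1) + (t-t_k))/m$ instead of a cleaner $\eta_k$, but the subsequent algebra remains elementary. Going from constant to varying step sizes beyond \citet{chourasia2021differential} comes essentially for free, as the key inequality is local to each iteration and only invokes $\eta_k \le 1/\beta$.
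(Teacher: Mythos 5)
Your proof is correct and follows essentially the same route the paper takes: the paper gives no standalone argument but defers to Lemma 5 of Chourasia et al.\ (induction over iterations using the Bakry--\'Emery bound for the Gaussian initialization, preservation of LSI under Lipschitz maps and under Gaussian convolution, with the projection handled as a $1$-Lipschitz post-processing), which is exactly the skeleton you reproduce. The one place you go beyond the paper's one-line citation is in explicitly computing the Lipschitz constant $1-\lambda a$ for the asymmetric interpolation of equation~(\ref{eq:random_process}) and verifying $\lambda a^2 - 2a + (t-t_k)\le 0$, which is the correct adaptation of the constant-step, full-batch computation to the stochastic, varying-step setting.
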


The proof of this lemma is exactly the same as Lemma 5 of \citet{chourasia2021differential}, where $n$ needs to be replaced with the batch size $m$ and $\eta$ with $\eta_k$.

We immediately derive the following bound on the Rényi privacy loss for \NoisySGD.

\begin{theorem}[Privacy guarantee for \NoisySGD]\label{th:privacy_SGLD}
    Let $\ell(\theta, \vect x)$ be an $L$-Lipschitz, $\lambda$-strongly convex and $\beta$-smooth loss function on closed convex set $\C$, then $\AlgoNoisySGD$ with start parameter $\theta_0 \sim \proj(\gauss{\frac{2\sigma^2}{\lambda}})$ and step-size $\eta < \frac{1}{\beta}$ satisfies $(\alpha, \varepsilon)$-Rényi differential privacy with 
    \begin{equation*}
        \varepsilon = \dfrac{4 \alpha L^2}{\lambda n^2 \sigma^2}(1 - e^{- \frac{\lambda}{2} \sum_{k=1}^K \eta_k }).
    \end{equation*}
\end{theorem}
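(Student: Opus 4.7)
The plan is to combine the two results that have just been established: the general RDP bound under a log-Sobolev assumption (Theorem \ref{th:rdp_noisysgd_clsi}) and the LSI Lemma for \NoisySGD{} immediately preceding the theorem statement. The hypotheses of the theorem have been chosen to match exactly the hypotheses of these two ingredients, so the proof should reduce to checking compatibility and performing a substitution of constants.

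First I would verify that the assumptions of the LSI Lemma are satisfied: $\ell$ is $\lambda$-strongly convex and $\beta$-smooth by assumption, the constant step size $\eta < \tfrac{1}{\beta}$ trivially defines a sequence $\{\eta_k\}_{k \ge 0}$ with $\eta_k < \tfrac{1}{\beta}$, and the prescribed initialization $\theta_0 \sim \proj(\gauss{2\sigma^2/\lambda})$ coincides with the lemma's requirement. The LSI Lemma therefore ensures that the tracing diffusion $\{\Theta_t\}_{t \ge 0}$ is $c$-LSI for every $t \ge 0$ with $c = \tfrac{\lambda}{2\sigma^2}$. In particular the condition holds throughout $0 \le t \le \sum_{k=1}^K \eta_k$, as required by Theorem \ref{th:rdp_noisysgd_clsi}.

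Second, since $\ell$ is $L$-Lipschitz and $\C$ is closed convex, Theorem \ref{th:rdp_noisysgd_clsi} applies and yields $(\alpha, \varepsilon)$-Rényi differential privacy with
\begin{equation*}
    \varepsilon = \dfrac{2 \alpha L^2}{c n^2 \sigma^4}\bigl(1 - e^{- \sigma^2 c \sum_{k=1}^{K} \eta_k}\bigr).
\end{equation*}
The final step is purely algebraic: substituting $c = \tfrac{\lambda}{2 \sigma^2}$, the prefactor simplifies to $\tfrac{2 \alpha L^2}{(\lambda/(2\sigma^2)) n^2 \sigma^4} = \tfrac{4 \alpha L^2}{\lambda n^2 \sigma^2}$, while the exponent simplifies to $\sigma^2 c = \tfrac{\lambda}{2}$, recovering exactly the bound claimed in the statement.

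There is essentially no obstacle here: the theorem is a corollary obtained by feeding the LSI constant from the second ingredient into the general RDP bound from the first. The real technical difficulty has already been discharged in the two prerequisites — propagating the $c$-LSI condition through the noisy projected updates (which reuses, mutatis mutandis, the argument from Lemma~5 of \citet{chourasia2021differential} with $n$ replaced by the batch size $m$), and solving the Rényi-divergence PDE piecewise along the sequence of step sizes $\eta_k$. Consequently, the proof body should only be a few lines, with a short sentence justifying the applicability of the LSI Lemma and a short sentence performing the substitution into Theorem~\ref{th:rdp_noisysgd_clsi}.
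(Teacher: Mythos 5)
Your proposal is correct and matches the paper's own (implicit) derivation exactly: the theorem is obtained by feeding the LSI constant $c = \tfrac{\lambda}{2\sigma^2}$ from the LSI Lemma into Theorem~\ref{th:rdp_noisysgd_clsi} and simplifying, which is precisely the substitution you carry out. Your verification of the hypotheses and the algebra ($\tfrac{2\alpha L^2}{c n^2 \sigma^4} = \tfrac{4\alpha L^2}{\lambda n^2 \sigma^2}$ and $\sigma^2 c = \tfrac{\lambda}{2}$) is accurate.
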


The case where the step size is constant follows:

\begin{corollary}[Privacy Guarantee for \NoisySGD{} with constant step-size]\label{coro:privacy_SGLD_constant}
    With $\ell(\theta, \vect x)$ and $\AlgoNoisySGD$ defined as in Theorem \ref{th:privacy_SGLD}, and with constant step size $\eta < \frac{1}{\beta}$, $\AlgoNoisySGD$ satisfies $(\alpha, \varepsilon)$-Rényi Differential Privacy with 
    \begin{equation*}
        \varepsilon = \dfrac{4 \alpha L^2}{\lambda n^2 \sigma^2}(1 - e^{- \lambda \eta K / 2}).
    \end{equation*}
\end{corollary}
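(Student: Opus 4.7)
The plan is to obtain this as an immediate specialization of Theorem \ref{th:privacy_SGLD}. The hypotheses of the corollary (loss function $L$-Lipschitz, $\lambda$-strongly convex and $\beta$-smooth, start parameter $\theta_0 \sim \proj(\gauss{2\sigma^2/\lambda})$, and step size bound $\eta < 1/\beta$) are exactly those of the theorem applied to the constant sequence $\eta_k = \eta$ for all $k \geq 0$, so the hypotheses transfer without further verification.

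The only calculation to perform is to evaluate the sum in the exponent. With $\eta_k = \eta$ for all $k$, we have
\begin{equation*}
    \sum_{k=1}^{K} \eta_k = \eta K,
\end{equation*}
so substituting into the bound of Theorem \ref{th:privacy_SGLD} gives
\begin{equation*}
    \varepsilon = \dfrac{4 \alpha L^2}{\lambda n^2 \sigma^2}\bigl(1 - e^{- \lambda \eta K / 2}\bigr),
\end{equation*}
which is the desired expression.

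Since this is a direct corollary, there is no real obstacle: the work has already been carried out in the proof of Theorem \ref{th:privacy_SGLD}. The only subtlety worth mentioning is that the step size assumption $\eta < 1/\beta$ is inherited from the lemma establishing $c$-LSI for DP-SGLD with $c = \lambda/(2\sigma^2)$, which is the LSI constant that feeds into the $e^{-\sigma^2 c t_K}$ factor of Theorem \ref{th:rdp_noisysgd_clsi} and produces the $e^{-\lambda \eta K/2}$ term after simplification. No additional convergence analysis or tightening is needed for the constant-stepsize case.
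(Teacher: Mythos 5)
Your proposal is correct and matches the paper's (implicit) treatment: the corollary is obtained exactly by substituting the constant sequence $\eta_k = \eta$ into Theorem \ref{th:privacy_SGLD}, giving $\sum_{k=1}^{K}\eta_k = \eta K$ in the exponent. Your remark tracing the $\eta < \frac{1}{\beta}$ condition back to the LSI lemma with $c = \frac{\lambda}{2\sigma^2}$ is also consistent with how the paper assembles the result.
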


We also provide the case where the step size is defined as $\smash{\eta_k = \frac{1}{2 \beta + \lambda k / 2}}$, which is further analyzed in the next section.

\begin{corollary}[Privacy Guarantee for \NoisySGD{} with decreasing step-size]\label{coro:privacy_SGLD_decreasing}
    With $\ell(\theta, \vect x)$ and $\AlgoNoisySGD$ defined as in Theorem \ref{th:privacy_SGLD}, and with step size $\eta_k = \frac{1}{2 \beta + \lambda k / 2}$, $\AlgoNoisySGD$ satisfies $(\alpha, \varepsilon)$-Rényi Differential Privacy with 
    \begin{equation*}
        \varepsilon = \dfrac{4 \alpha L^2}{\lambda \sigma^2 n^2}(1 - e^{- \log(1 + \frac{\lambda K}{4 \beta})}) =  \dfrac{4 \alpha L^2}{\lambda n^2 \sigma^2} \frac{\lambda K}{4 \beta + \lambda K}.
    \end{equation*}
\end{corollary}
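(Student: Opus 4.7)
The idea is to invoke Theorem \ref{th:privacy_SGLD} with the specific step-size sequence $\eta_k = \frac{1}{2\beta + \lambda k/2}$ and then reduce the exponent $\frac{\lambda}{2}\sum_{k=1}^K \eta_k$ to the closed form $\log(1 + \lambda K/(4\beta))$ appearing in the statement. Every other ingredient is inherited directly from Theorem \ref{th:privacy_SGLD}.

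First I verify the hypotheses. For every $k \ge 0$, one has $\eta_k = \frac{1}{2\beta + \lambda k/2} \le \frac{1}{2\beta} < \frac{1}{\beta}$, so the step-size constraint of Theorem \ref{th:privacy_SGLD} is satisfied. The Lipschitz, strong convexity, smoothness and initialization assumptions are inherited from the hypotheses of the corollary, so applying Theorem \ref{th:privacy_SGLD} gives
$$ \varepsilon \le \dfrac{4 \alpha L^2}{\lambda n^2 \sigma^2} \left( 1 - \exp\left(-\dfrac{\lambda}{2}\sum_{k=1}^K \eta_k\right) \right). $$

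Next I estimate the sum by an integral. Since $x \mapsto \frac{1}{2\beta + \lambda x/2}$ is positive and strictly decreasing, the bound $\eta_k \le \int_{k-1}^{k} \frac{dx}{2\beta + \lambda x/2}$ holds for each $k \ge 1$, whence
$$ \sum_{k=1}^K \eta_k \;\le\; \int_0^K \frac{dx}{2\beta + \lambda x/2} \;=\; \frac{2}{\lambda} \log\!\left( 1 + \frac{\lambda K}{4\beta} \right). $$
Multiplying by $\lambda/2$, using that $x \mapsto 1 - e^{-x}$ is increasing, and simplifying $1 - e^{-\log(1 + \lambda K/(4\beta))} = \frac{\lambda K}{4\beta + \lambda K}$ yields the displayed expression.

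There is no real obstacle: the statement is a routine specialization of Theorem \ref{th:privacy_SGLD}, with the only genuine calculation being an elementary antiderivative. The conceptual point worth underlining is that the schedule $\eta_k = \frac{1}{2\beta + \lambda k/2}$ is tailored precisely so that the partial sums $\sum_{k=1}^K \eta_k$ grow like $\frac{2}{\lambda} \log(1 + \lambda K/(4\beta))$, which is what produces the clean closed form in the privacy bound.
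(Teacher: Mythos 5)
Your proposal is correct and follows exactly the route the paper intends: the corollary is a direct specialization of Theorem \ref{th:privacy_SGLD} after checking $\eta_k \le \frac{1}{2\beta} < \frac{1}{\beta}$ and bounding $\sum_{k=1}^K \eta_k$ by $\int_0^K \frac{dx}{2\beta + \lambda x/2} = \frac{2}{\lambda}\log\bigl(1 + \frac{\lambda K}{4\beta}\bigr)$, which is the same sum-to-integral comparison the paper uses in the proof of Lemma \ref{lemma:emp_risk_decreasing_eta}, and the monotonicity of $x \mapsto 1 - e^{-x}$ makes this upper bound on the sum yield a valid (slightly enlarged) privacy parameter $\varepsilon$.
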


\paragraph{Discussion.} Let us consider that $\eta$ in Corollary \ref{coro:privacy_SGLD_constant} is set as $\eta = \frac{1}{2 \beta}$, so that it can be compared to Corollary \ref{coro:privacy_SGLD_decreasing} and also matches the maximum upper bound for which we derive utility guarantees is the next section. 
In the regime where $K$ is small (compared to $\frac{\beta}{\lambda}$), both corollaries have equivalent bounds on $\varepsilon$. Indeed, in the fixed step size setting we have with $\smash{\eta = \frac{1}{2\beta}}$:
\begin{equation*}
    \varepsilon = \frac{4 \alpha L^2}{\lambda n^2 \sigma^2}(1 - e^{- \lambda K / {4 \beta}}) \sim_{K \ll \frac{\beta}{\lambda} } \frac{\alpha L^2 K}{\beta n^2 \sigma^2},
\end{equation*}
while in the decreasing step size setting we have
\begin{equation*}
    \varepsilon = \dfrac{4 \alpha L^2}{\lambda n^2 \sigma^2} \frac{\lambda K}{4 \beta + \lambda K}
    \sim_{K \ll \frac{\beta}{\lambda} } \frac{\alpha L^2 K}{\beta n^2 \sigma^2}.
\end{equation*}
In particular, $\varepsilon$ reaches the baseline composition analysis from \citet{abadi2016deep} up to a factor 2 : $\varepsilon' = \frac{\alpha L^2}{n^2 \sigma^2} \cdot \eta K = \frac{\alpha L^2 K}{2 \beta n^2 \sigma^2}$. 

In the regime where $K$ is sufficiently large (compared to $\frac{\beta}{\lambda}$), both the fixed $\eta$ and decreasing $\eta_k$ settings also reach the same bound on $\varepsilon$, equal to
\begin{equation*}
    \varepsilon \sim_{K \gg \frac{\beta}{\lambda} } \frac{4 \alpha L^2}{\lambda n^2 \sigma^2}.
\end{equation*}

As a side note, we notice that we can consider the unconstrained regularized version of the problem: $\widetilde{\L_\D}(\theta) = \L_\D(\theta) + \frac{\lambda}{2} \norm{\theta}^2_2$ and derive equivalent properties. In this scenario, we no longer need the strong convexity assumption on $\L_\D(\theta)$. In addition we can use the optimality of $\theta^*$ for $\widetilde{\L_\D}$ to derive two equations :
\begin{align*}
    & \L_\D(\theta^*) + \frac{\lambda}{2} \norm{\theta^*}^2_2 \le \L_\D(0), \ \ \ \ \  c\nabla \L_\D(\theta^*) + \lambda \theta^* = 0.
\end{align*}
Each of these provides a bound on $\norm{\theta^*}_2$, by using respectively the positivity of $\L_\D$ and the lipschitzness of $\L_\D$,
\begin{equation*}
    \norm{\theta^*}_2 \le \Big(\frac{2 \L_\D(0)}{\lambda}\Big)^{1/2}, \quad
    \norm{\theta^*}_2 \le \frac{L}{\lambda},
\end{equation*}
which can be used as bounds for the radius of the convex $\C$ we project onto, so that we still actually end up solving the unconstrained problem.


\section{Utility analysis for noisy stochastic gradient descent}\label{sec:utility_SGLD}

Differential privacy is known for setting a trade-off between privacy and utility. To assess the utility of the noisy stochastic gradient descent algorithm $\AlgoNoisySGD$, we measure two quantities, the worst case excess empirical risk
\begin{equation*}
    \max_{\D \in \X^n} \E[\L_{\D}(\theta_K) - \L_\D(\theta^*)],
\end{equation*}
and the worst case average empirical risk
\begin{equation}\label{eq:avg_emp_rik}
    \max_{\D \in \X^n} \E[\frac{1}{K}\sum_{k=1}^K \L_{\D}(\theta_k) - \L_\D(\theta^*)],
\end{equation}
where $\theta_K$ is the output of the randomized algorithm $\AlgoNoisySGD$ on $\D$ during $K$ iterations, $\theta^*$ is the solution to the standard non-noisy GD algorithm and the expectation is taken over the randomness of the algorithm. 

\subsection{Fixed step size $\eta$}

We propose a bound on the worst case excess empirical risk when the learning rate is fixed and satisfies $\eta < \frac{1}{2 \beta}$.

\begin{lemmaE}[Empirical risk for smooth and strongly convex loss][category=lemma:emp_risk]
\label{lemma:emp_risk}

    Let $\ell(\theta, \vect x)$ be an $L$-Lipschitz, $\lambda$-strongly convex and $\beta$-smooth loss function on closed convex set $\C$, $\AlgoNoisySGD$ be parameterized with step-size $\eta < \frac{1}{2 \beta}$ and start parameter $\smash{\theta_0 \sim \proj(\gauss{\frac{2\sigma^2}{\lambda}})}$, then the empirical risk of $\AlgoNoisySGD$ is bounded by
    \begin{multline}
        \E[ \L_{\D}(\theta_K\!) \!-\! \L_{\D}(\theta^*\!)]
        \!\le\! \dfrac{\beta}{2}\! \E[\norm{\theta_0 \!-\! \theta^*\!}_2^2]
        e^{-\lambda \eta K} \\
        \!\!+\! \dfrac{\beta \eta \xi^2 }{2 \lambda} \!+\! \dfrac{\beta d \sigma^2}{\lambda},
    \end{multline}
where $\theta^*$ is the minimizer of $\L_{\D}(\theta)$ in $\C$ and $\xi^2 = \E[\norm{\nabla \L_{\B}(\theta^*)}_2^2]$.
\end{lemmaE}

\begin{proofE}






Let's recall the noisy SGD update equation:
\begin{equation*}
    \theta_{k+1} = \proj(\theta_k - \eta \nabla \L_{\B}(\theta_k) + \sqrt{2 \eta \sigma^2} \gauss{})
\end{equation*}

From the definitions of $\proj(\cdot)$ and $\theta^*$, we have:
\begin{equation*}
    \proj(\theta^*) = \theta^*
\end{equation*}

Combining this facts and using the contractivity of the projection $\proj(\cdot)$, we derive:
$$\begin{array}{l}
    \norm{\theta_{k+1} - \theta^*}_2^2 \\
    \; = \norm{\proj(\theta_k \!-\! \eta \nabla \L_{\B}\!(\theta_k) \!+\! \sqrt{2 \eta \sigma^2} \gauss{}) \!-\! \theta^*}_2^2 \\ 
    \; = \norm{\proj(\theta_k \!-\! \eta \nabla \L_{\B}\!(\theta_k) \!+\! \sqrt{2 \eta \sigma^2} \gauss{}) \!-\! \proj(\theta^*)}_2^2 \\ 
    \; \le \norm{\theta_k - \eta \nabla \L_{\B}(\theta_k) + \sqrt{2 \eta \sigma^2} \gauss{} - \theta^*}_2^2 \\
    \; = \norm{\theta_k - \theta^* - \eta \nabla \L_{\B}(\theta_k) + \sqrt{2 \eta \sigma^2} \gauss{} }_2^2 \\
    \; = \norm{\theta_k - \theta^*}_2^2 + \eta^2 \norm{\nabla \L_{\B}(\theta_k)}_2^2 + 2 \eta \sigma^ 2 \norm{\gauss{}}_2^2\\
    \quad +2 \langle \theta_k - \theta^*, \sqrt{2 \eta \sigma^2} \gauss{} \rangle \\
    \quad - 2 \eta \langle \nabla \L_{\B}(\theta_k), \sqrt{2 \eta \sigma^2} \gauss{} \rangle \\
    \quad  - 2 \eta \langle \theta_k - \theta^*, \nabla \L_{\B}(\theta_k) \rangle \\
     
\end{array}$$

We now take the expectation with respect to the random variable $\B$ sampled from $\D$, and we note that $\E_\D[\L_{\B}(\theta_k)] = \L_{\D}(\theta_k)$ since $\L_{\B}$ is an unbiased estimate of $\L_{\D}$.
$$\begin{array}{l}
    \E_\D[\norm{\theta_{k+1} - \theta^*}_2^2] \\
    \; = \norm{\theta_k - \theta^*}_2^2 + \eta^2 \E_\D[\norm{\nabla \L_{\B}(\theta_k)}_2^2] \\
    \quad + 2 \eta \sigma^ 2 \E_\D[\norm{\gauss{}}_2^2]\\
    \quad +2 \langle \theta_k - \theta^*, \sqrt{2 \eta \sigma^2} \gauss{} \rangle \\
    \quad - 2 \eta \langle \nabla \L_{\D}(\theta_k), \sqrt{2 \eta \sigma^2} \gauss{} \rangle \\
    \quad  - 2 \eta \langle \theta_k - \theta^*, \nabla \L_{\D}(\theta_k) \rangle \\
\end{array}$$

Using the classic inequality $(a + b)^2 \le 2(a^2 + b^2)$, we derive:
$$\begin{array}{l}
\E_\D[\norm{\nabla \L_{\B}(\theta_k)}_2^2] \\
\; = \E_\D[\norm{\nabla \L_{\B}(\theta_k) - \nabla \L_{\B}(\theta^*) + \nabla \L_{\B}(\theta_*) }_2^2] \\
\; \le\! 2 \E_\D[\norm{\nabla \L_{\B}\!(\theta_k) \!-\! \nabla \L_{\B}\!(\theta^*)}_2^2] \!+\! 2 \E_\D[\norm{\nabla \L_{\B}\!(\theta^*) }_2^2] \\
\end{array}$$

Furthermore, by $\beta$-smoothness of $\nabla \L_{\B}$, we can use the property of co-coercivity of the gradients:
\begin{equation}\label{eq:cocoercitivity}
    \norm{\nabla \L_{\B}\!(\theta_k) \!-\! \nabla \L_{\B}\!(\theta^*\!)}_2^2 \!\le\! \beta \langle \theta_k \!-\! \theta^*\!, \!\nabla \L_{\B}\!(\theta_k) \!-\! \nabla \L_{\B}\!(\theta^*\!) \rangle
\end{equation}

Taking expectation over $\B$ and using optimality of $\theta^*$:
\begin{equation}\label{eq:expected_coco}
\begin{array}{l}
    \E_{\D}[\norm{\nabla \L_{\B}(\theta_k) - \nabla \L_{\B}(\theta^*)}_2^2]  \\
    \; \le \beta \langle \theta_k - \theta^*, \nabla \L_{\D}(\theta_k) - \nabla \L_{\D}(\theta^*) \rangle \\
    \; = \beta \langle \theta_k - \theta^*, \nabla \L_{\D}(\theta_k) \rangle
\end{array}
\end{equation}

Combining these elements, we derive:
$$\begin{array}{l}
    \E_\D[\norm{\theta_{k+1} - \theta^*}_2^2] \\
    \; = \norm{\theta_k - \theta^*}_2^2 + 2 \eta^2 \E_\D[\norm{\nabla \L_{\B}(\theta_k) - \nabla \L_{\B}(\theta^*)}_2^2] \\
    \quad + 2 \eta^2 \E_\D[\norm{\nabla \L_{\B}(\theta^*)}_2^2] + 2 \eta \sigma^ 2 \E_\D[\norm{\gauss{}}_2^2] \\
    \quad +2 \langle \theta_k - \theta^*, \sqrt{2 \eta \sigma^2} \gauss{} \rangle \\
    \quad - 2 \eta \langle \nabla \L_{\D}(\theta_k), \sqrt{2 \eta \sigma^2} \gauss{} \rangle \\
    \quad  - 2 \eta \langle \theta_k - \theta^*, \nabla \L_{\D}(\theta_k) \rangle \\
    \; \le \norm{\theta_k - \theta^*}_2^2 + 2 \eta (\eta \beta - 1) \langle \theta_k - \theta^*, \nabla \L_{\D}(\theta_k) \rangle \\
    \quad + 2 \eta^2 \E_\D[\norm{\nabla \L_{\B}(\theta^*) }_2^2] + 2 \eta \sigma^ 2 \E_\D[\norm{\gauss{}}_2^2]\\
    \quad +2 \langle \theta_k - \theta^*, \sqrt{2 \eta \sigma^2} \gauss{} \rangle \\
    \quad - 2 \eta \langle \nabla \L_{\D}(\theta_k), \sqrt{2 \eta \sigma^2} \gauss{} \rangle \\
    \quad \mathrm{\:using\:(\ref{eq:cocoercitivity})\:and\:}(\ref{eq:expected_coco})
\end{array}$$

Let's assume that $\eta \le \frac{1}{2 \beta}$. Therefore, $2 \eta (\eta \beta - 1) \le - \eta$. 
By optimality of $\theta^*$ and strong convexity arguments on $\L_\D$, we deduce the following inequality:
$$\begin{array}{ll}
\!\!- \langle \theta_k \!-\! \theta^*\!, \nabla \L_{\D}(\theta_k) \rangle \!\!\!\!\!
& \le \L_\D(\theta_k) \!-\! \L_\D(\theta^*\!) \!+\! \dfrac{\lambda}{2}\! \norm{\theta_k \!-\! \theta^*}_2^2 \\
& \le \lambda \norm{\theta_k - \theta^*}_2^2
\end{array}$$

Plugging this together, we have:
$$\begin{array}{l}
    \E_\D[\norm{\theta_{k+1} - \theta^*}_2^2] \\
    \; \le (1 - \eta \lambda) \norm{\theta_k - \theta^*}_2^2 + \eta^2 \E_\D[\norm{\nabla \L_{\B}(\theta^*)}_2^2] \\
    \quad + 2 \eta \sigma^ 2 \E_\D[\norm{\gauss{}}_2^2] \\
    \quad +2 \langle \theta_k - \theta^*, \sqrt{2 \eta \sigma^2} \gauss{} \rangle \\
    \quad - 2 \eta \langle \nabla \L_{\D}(\theta_k), \sqrt{2 \eta \sigma^2} \gauss{} \rangle \\
\end{array}$$

We take the expectation again:
\begin{equation} \label{eq:final_theta_diff}
    \E[\norm{\theta_{k+1} - \theta^*}_2^2]
    \le (1 - \eta \lambda) \E[\norm{\theta_k - \theta^*}_2^2] \qquad
\end{equation}
\vspace{-8mm}
$$\qquad \qquad \qquad + \eta^2 \E[\norm{\nabla \L_{\B}(\theta^*)}_2^2] + 2 \eta d \sigma^ 2$$
\\
Now, by $\beta$-smoothness of $\L_{\D}(\cdot)$, we have:
\[
    \L_{\D}(\theta_K) - \L_{\D}(\theta^*) \le \langle \nabla \L_{\D}(\theta^*), \theta_K - \theta^* \rangle + \dfrac{\beta}{2} \norm{\theta_K - \theta^*}_2^2
 \]

Second, by optimality of $\theta^*$ in $\C$ and the fact that $\theta_K \in \C$, we have 
\begin{equation*}
    \langle \nabla \L_{\D}(\theta^*), \theta_K - \theta^* \rangle = 0
\end{equation*}

Combining these two we get:
\[
    \E[ \L_{\D}(\theta_K) - \L_{\D}(\theta^*)] \le \dfrac{\beta}{2} \E[\norm{\theta_K - \theta^*}_2^2]
\]

Using the recursive equation (\ref{eq:final_theta_diff}) repeatedly for $k = 0, \dots , K - 1$, we have:
$$\begin{array}{l}
    \E[ \L_{\D}(\theta_K) - \L_{\D}(\theta^*)] \\
    \; \le \dfrac{\beta}{2} (1 - \eta \lambda)^K  \E[\norm{\theta_0 - \theta^*}_2^2]  \\
    \qquad\qquad\qquad\quad + \dfrac{\beta}{2} (\eta^2 \xi^2 + 2 \eta d \sigma^2) \sum_{k= 0}^{K-1} (1 - \eta \lambda)^k \\
    \; \le \dfrac{\beta}{2} e^{-\lambda \eta K} \E[\norm{\theta_0 - \theta^*}_2^2]
    + \dfrac{\beta \eta \xi^2 }{2 \lambda} + \dfrac{\beta d \sigma^2}{\lambda} \\[7pt]
    \mathrm{where}\: \xi^2 = \E[\norm{\nabla \L_{\B}(\theta^*)}_2^2]
\end{array}$$

\end{proofE}

\textit{Sketch of proof.} First, we recursively bound $\smash{\norm{\theta_{k+1} - \theta^*}_2^2}$ as a function of $\smash{\norm{\theta_{k} - \theta^*}_2^2}$, using the definition of $\theta_{k+1}$. Then, we take the expectation with respect to $\B$ and use co-coercivity of the gradients, and take the expectation again to derive a recursive relationship between $\smash{\E[\norm{\theta_{k+1} - \theta^*}_2^2]}$ and $\smash{\E[\norm{\theta_{k} - \theta^*}_2^2]}$. Last, we express the empirical risk $\smash{\E[ \L_{\D}(\theta_K) - \L_{\D}(\theta^*)]}$ as a function of $\smash{\E[\norm{\theta_{K} - \theta^*}_2^2]}$. 

This shows that under Lipschitz smooth strongly convex loss function, the empirical risk of $\AlgoNoisySGD$ decreases as the iterations increase, and reaches a constant factor which is the sum of a term directly related to the variance of the noise $\sigma^2$ added at each iteration and another term which represents the error due to the SGD process, which decreases with the learning rate $\eta$.

\begin{lemma}[Empirical risk for smooth and strongly convex loss, independent of $\theta$]\label{lemma:emp_risk_wo_theta}
    Let $\ell$ and $\AlgoNoisySGD$ be defined as in Lemma \ref{lemma:emp_risk}, then the empirical risk is bounded by
    \begin{equation*}
        \E[ \L_{\D}(\theta_K\!) \!-\! \L_{\D}(\theta^*\!)]
        \le \dfrac{2 \beta L^2}{\lambda^2}
        e^{-\lambda \eta K}
        \!\!+\! \dfrac{\beta \eta \xi^2 }{2 \lambda} \!+\! \dfrac{\beta d \sigma^2}{\lambda},
    \end{equation*}
where $\xi^2 = \E[\norm{\nabla \L_{\B}(\theta^*)}_2^2]$.
\end{lemma}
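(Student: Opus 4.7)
The plan is to reduce this statement directly to Lemma \ref{lemma:emp_risk} by replacing the problem-dependent term $\E[\norm{\theta_0 - \theta^*}_2^2]$ with a bound that depends only on $L$ and $\lambda$. A quick comparison of the two statements shows that the only change is in the coefficient of $e^{-\lambda \eta K}$: going from $\frac{\beta}{2} \E[\norm{\theta_0 - \theta^*}_2^2]$ to $\frac{2\beta L^2}{\lambda^2}$. Hence it suffices to establish $\E[\norm{\theta_0 - \theta^*}_2^2] \le \frac{4 L^2}{\lambda^2}$, and the rest is immediate substitution.

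To prove this bound, I would combine $\lambda$-strong convexity of $\L_{\D}$ with its $L$-Lipschitzness, both assumed to hold on the closed convex set $\C$. Since $\theta^*$ minimizes $\L_{\D}$ over $\C$, the first-order optimality condition gives $\langle \nabla \L_{\D}(\theta^*), \theta - \theta^* \rangle \ge 0$ for every $\theta \in \C$. Strong convexity then yields
\[
    \L_{\D}(\theta) - \L_{\D}(\theta^*) \ge \tfrac{\lambda}{2} \norm{\theta - \theta^*}_2^2,
\]
while Lipschitzness gives $\L_{\D}(\theta) - \L_{\D}(\theta^*) \le L \, \norm{\theta - \theta^*}_2$. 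Chaining the two inequalities and dividing by $\norm{\theta - \theta^*}_2$ (handling the trivial case $\theta = \theta^*$ separately) produces the deterministic diameter bound
\[
    \norm{\theta - \theta^*}_2 \le \tfrac{2L}{\lambda} \qquad \text{for all } \theta \in \C.
\]

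Finally, because $\theta_0 \sim \proj(\gauss{\frac{2\sigma^2}{\lambda}})$ lies in $\C$ by construction, the deterministic bound applies pointwise and so in expectation, yielding $\E[\norm{\theta_0 - \theta^*}_2^2] \le \frac{4 L^2}{\lambda^2}$. Plugging this into Lemma \ref{lemma:emp_risk} gives exactly the desired inequality. I do not expect a genuine obstacle here; the only point that requires a little care is that $\theta^*$ minimizes over $\C$ rather than over $\R^d$, so the first-order optimality must be stated as a variational inequality instead of $\nabla \L_{\D}(\theta^*) = 0$, but this is a standard consequence of convex optimization over a closed convex set.
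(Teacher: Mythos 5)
Your proposal is correct and follows essentially the same route as the paper: both reduce the statement to Lemma \ref{lemma:emp_risk} by bounding $\E[\norm{\theta_0 - \theta^*}_2^2] \le \frac{4L^2}{\lambda^2}$ and substituting. The only (cosmetic) difference is in how the bound $\norm{\theta_0 - \theta^*}_2 \le \frac{2L}{\lambda}$ is justified: the paper invokes the assumed diameter bound $\norm{\C} \le \frac{2L}{\lambda}$ (which it motivates elsewhere via the radius bound $\norm{\theta^*}_2 \le L/\lambda$), whereas you derive it directly from strong convexity, Lipschitzness, and the variational optimality of $\theta^*$ over $\C$ --- a slightly more self-contained argument that reaches the same constant.
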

\begin{proof}
Since we have $\norm{\C} \le \frac{2 L}{\lambda}$, we can bound $\E[\norm{\theta_0 - \theta^*}_2^2] \le \frac{4L^2}{\lambda^2}$, as $\theta_0, \theta^* \in \C$.
\end{proof}

Combining now Lemma \ref{lemma:emp_risk_wo_theta} and Theorem \ref{th:privacy_SGLD}, we derive the utility of $\AlgoNoisySGD$ under $(\alpha, \varepsilon)$-Rényi differential privacy.

\begin{theoremE}[Utility bound for $(\alpha, \varepsilon)$-Rényi differential privacy][category=th:utility_renyi]\label{th:utility_renyi}
    Let $\ell(\theta, \vect x)$ be an $L$-Lipschitz, $\lambda$-strongly convex and $\beta$-smooth loss function on closed convex set $\C$, then $\AlgoNoisySGD$ with start parameter $\smash{\theta_0 \sim \proj(\gauss{\frac{2\sigma^2}{\lambda}})}$ and constant step-size $\eta = \frac{1}{2 \beta}$, satisfies $(\alpha, \varepsilon)$ Rényi differential privacy and
    \begin{equation*}
        \E[ \L_{\D}(\theta_K) - \L_{\D}(\theta^*)] = O\left(\dfrac{ \alpha \beta d L^2}{\varepsilon \lambda^2 n^2} \right) + \dfrac{\xi^2}{4 \lambda},
    \end{equation*}
    where $\sigma^2$ and $K$ are set as such:
    \begin{equation*}
        \sigma^2 = \dfrac{4 \alpha L^2}{\varepsilon \lambda n^2}, \quad
        K = \dfrac{2 \beta}{\lambda} \log \! \left( \dfrac{\varepsilon n^2}{\alpha d} \right).
    \end{equation*}
\end{theoremE}

\begin{proofE}
First, as $\theta_0, \theta^* \in \C$ and $\norm{C}_2 \le \frac{2L}{\lambda}$, we have $\E[\norm{\theta_0 - \theta^*}_2^2] \le \frac{4L^2}{\lambda^2}$.

Then, we plug the values of $\sigma^2$ and $K$ in Lemma \ref{lemma:emp_risk}:
\[
\begin{array}{ll}
    \E[ \L_{\D}(\theta_K\!) \!-\! \L_{\D}(\theta^*\!)]\!\!\!\!
    & \!\le\! \dfrac{\beta}{2}\! \dfrac{4L^2}{\lambda^2}
      e^{-\lambda \eta K} \\
    & \quad + \dfrac{\beta d \sigma^2}{\lambda} \!+\!  \dfrac{\beta \eta \xi^2 }{2 \lambda} \\ 
    & \!\le\! \dfrac{2 \beta L^2}{\lambda^2}\! 
       \dfrac{\alpha d}{\varepsilon n^2}  \\ 
    & \quad + \dfrac{\beta d}{\lambda} \dfrac{4 \alpha L^2}{\lambda \varepsilon n^2} \!+\!  \dfrac{\beta \eta \xi^2 }{2 \lambda} \\
    & \le \dfrac{6 \alpha \beta d L^2}{\varepsilon \lambda^2 n^2}  + \dfrac{\xi^2}{4 \lambda} \:\mathrm{with}\: \eta \!=\! \dfrac{1}{2 \beta}
     
\end{array}
\]

\end{proofE}

\begin{theoremE}[Utility bound for $(\epsilon, \delta)$-differential privacy][category=th:utility_dwork]
  With the same conditions as is Theorem \ref{th:utility_renyi}, for $\epsilon \le 2 \log(1/\delta)$ and $\delta > 0$, $\AlgoNoisySGD$ satisfies $(\epsilon, \delta)$ differential privacy and
    \begin{equation*}
        \E[ \L_{\D}(\theta_K) - \L_{\D}(\theta^*)] = O\left(\dfrac{ \beta d L^2 \log(1/\delta)}{\epsilon^2 \lambda^2 n^2} \right) + \dfrac{\xi^2}{4 \lambda},
    \end{equation*}
    where $\sigma^2$ and $K$ are set as such:
    \begin{align*}
        \sigma^2 = \dfrac{8 L^2 (\epsilon + 2 \log(1/\delta)}{\epsilon^2 \lambda n^2} \\
        K = \dfrac{2 \beta}{\lambda} \log \! \left( \dfrac{\epsilon^2 n^2}{4 \log(1/\delta) d} \right).
    \end{align*}
\end{theoremE}

\begin{proofE}
By setting $\varepsilon = \frac{\epsilon}{2}$, we derive from Proposition \ref{prop:renyi_to_dp} :
\begin{equation*}
    \alpha = 1 + \frac{2}{\epsilon} \log(1 / \delta)
\end{equation*}.

We use this to rewrite the results from Theorem \ref{th:utility_renyi}:
\[
\begin{array}{ll}
    \E[ \L_{\D}(\theta_K\!)\!- \!\L_{\D}(\theta^*\!)]\!\! & \le \dfrac{6 \alpha \beta d L^2}{\varepsilon \lambda^2 n^2} + \dfrac{\xi^2}{4 \lambda} \\
    & = \dfrac{6 \beta d L^2}{\lambda^2 n^2} \dfrac{1 + \frac{2}{\epsilon} \log(1 / \delta)}{\frac{\epsilon}{2}} + \dfrac{\xi^2}{4 \lambda} \\ 
    & = \dfrac{6 \beta d L^2}{\lambda^2 n^2} \dfrac{2 \epsilon + 4 \log(1 / \delta)}{\epsilon^2} + \dfrac{\xi^2}{4 \lambda} \\
    & \le \dfrac{6 \beta d L^2}{\lambda^2 n^2} \dfrac{8 \log(1 / \delta)}{\epsilon^2} + \dfrac{\xi^2}{4 \lambda}\\
    & \quad \mathrm{using} \: \epsilon \le 2 \log(1/\delta)
\end{array}
\]
Similarly,
\[
\begin{array}{ll}
    \sigma^2\! &= \dfrac{4 L^2 \alpha}{\lambda n^2 \varepsilon} \\
    &= \dfrac{4 L^2}{\lambda n^2}  \dfrac{1 + \frac{2}{\epsilon} \log(1 / \delta)}{\frac{\epsilon}{2}} \\
    &= \dfrac{8 L^2 (\epsilon + 2 \log(1/\delta)}{\epsilon^2 \lambda n^2}
\end{array}
\]
\[
\begin{array}{ll}
    K\! &= \dfrac{2 \beta}{\lambda} \log \! \left( \dfrac{n^2}{ d} \dfrac{\varepsilon}{\alpha } \right)\\
    &= \dfrac{2 \beta}{\lambda} \log \! \left( \dfrac{n^2}{ d} \dfrac{\frac{\epsilon}{2}}{1 + \frac{2}{\epsilon} \log(1 / \delta) } \right) \\
    &= \dfrac{2 \beta}{\lambda} \log \! \left( \dfrac{n^2}{ d} \dfrac{\epsilon^2}{ 2 \epsilon + 4 \log(1 / \delta) } \right) \\
    &\le \dfrac{2 \beta}{\lambda} \log \! \left( \dfrac{ \epsilon^2 n^2}{ 4 \log(1 / \delta) d} \right) 
\end{array}
\]
\end{proofE}

As a side note, arguments of the proof of Lemma \ref{lemma:emp_risk} (like the co-coercivity of the gradients) can be reused to improve the upper bound on $\eta$ from $\frac{\lambda}{2 \beta^2}$ to $\frac{1}{2 \beta}$ in Theorem 4 of \citet{chourasia2021differential} on the utility of DP-GLD. We provide experimental evidence in the next section that this factor $\frac{\lambda}{\beta}$ is non-negligible.


\subsection{Decreasing step size $\eta_k$}

To remove the $\frac{\xi^2}{4 \lambda}$ term which is due to using stochastic gradient descent, we follow the approach from \citet{NIPS2012_905056c1} and propose to bound the worst case average empirical risk (\ref{eq:avg_emp_rik}) when the step size is decreasing and follows:
\vspace{-0.1cm}
\begin{equation*}
    \eta_k = \dfrac{1}{2 \beta + \frac{\lambda k}{2}} \, , \, k \ge 0.
\end{equation*}


\vspace{-0.2cm}
\begin{lemmaE}[Empirical risk for smooth and strongly convex loss with decreasing learning rate][category=lemma:emp_risk_decreasing_eta]\label{lemma:emp_risk_decreasing_eta}
    Let $\ell(\theta, \vect x)$ be an $L$-Lipschitz, $\lambda$-strongly convex and $\beta$-smooth loss function on closed convex set $\C$, $\AlgoNoisySGD$ be parameterized with \emph{decreasing} step-size $\smash{\eta_k = \frac{1}{2 \beta + \lambda k / 2}}$ and start parameter $\smash{\theta_0 \sim \proj(\gauss{\frac{2\sigma^2}{\lambda}})}$, then the average empirical risk of $\AlgoNoisySGD$ is bounded by
    \begin{multline}
        \E[\frac{1}{K}\!\!\sum_{k=1}^K \!\L_{\D}(\theta_k\!) \!-\! \L_\D(\theta^*\!)]
        \!\le\! \dfrac{2 \beta}{K} \E[\norm{\theta_{0} \!-\! \theta^*\!}_2^2] \\ \!+\! \dfrac{4 \xi^2}{K \lambda} \!\log \!\left(\!\!1 \!+\! \dfrac{\lambda K}{4 \beta}\!\!\right) \!+\! 2 d\sigma^2,
    \end{multline}
where $\theta^*$ is the minimizer of $\L_{\D}(\theta)$ in $\C$.
\end{lemmaE}
\begin{proofE}
We define $\eta_k$ as such 
\[
    \eta_k = \dfrac{1}{2 \beta + \frac{\lambda k}{2}}.
\]
Note that in particular that for $k \ge 0$, $\eta_k \le \frac{1}{2 \beta}$ and hence $ 2 \eta_k (\eta_k \beta - 1) \le - \eta_k$.

As detailed in the proof of Lemma \ref{lemma:emp_risk} we have the following:
$$\begin{array}{l}
    \E_\D[\norm{\theta_{k+1} \!-\! \theta^*}_2^2] \\
    \; \le \norm{\theta_k - \theta^*}_2^2 + 2 \eta_{k+1} (\eta_{k+1} \beta - 1) \langle \theta_k - \theta^*, \nabla \L_{\D}(\theta_k) \rangle \\
    \quad + 2 \eta_{k+1}^2 \E_\D[\norm{\nabla \L_{\B}(\theta^*) }_2^2] + 2 \eta_{k+1} \sigma^ 2 \E_\D[\norm{\gauss{}}_2^2]\\
    \quad +2 \langle \theta_k - \theta^*, \sqrt{2 \eta_{k+1} \sigma^2} \gauss{} \rangle \\
    \quad - 2 \eta_{k+1} \langle \nabla \L_{\D}(\theta_k), \sqrt{2 \eta_{k+1} \sigma^2} \gauss{} \rangle \\
    \quad \mathrm{\:using\:(\ref{eq:cocoercitivity})\:and\:}(\ref{eq:expected_coco})
\end{array}$$

By taking expectation again we derive
$$\begin{array}{ll}
    \E[\norm{\theta_{k+1} \!\!-\! \theta^*}_2^2]\!\!\!\!\! & \le \E[\norm{\theta_k - \theta^*}_2^2] \\
    & \!\!\!\! + 2 \eta_{k+1} (\eta_{k+1} \beta \!-\! 1) \E[\!\langle \nabla \L_{\D}(\theta_k\!), \theta_k \!-\! \theta^* \rangle] \\
    & \!\!\!\! + 2 \eta_{k+1}^2 \E[\norm{\nabla \L_{\B}(\theta^*) }_2^2] + 2 \eta_{k+1} \sigma^ 2 d \\
    & \!\!\!\! \le \E[\norm{\theta_k - \theta^*}_2^2] \\
    & \!\!\!\! - \eta_{k+1} \E[\langle \nabla \L_{\D}(\theta_k), \theta_k - \theta^* \rangle] \\
    & \!\!\!\! + 2 \eta_{k+1} (\eta_{k+1} \xi^2 + d \sigma^2)
\end{array}$$    

By optimality of $\theta^*$ and strong convexity arguments on $\L_\D$, we have
$$\begin{array}{ll}
\!\!- \langle \theta_k \!-\! \theta^*\!, \nabla \L_{\D}(\theta_k) \rangle \!\!\!\!\!
& \le \L_\D(\theta_k) \!-\! \L_\D(\theta^*\!) \!+\! \dfrac{\lambda}{2}\! \norm{\theta_k \!-\! \theta^*}_2^2 
\end{array}$$

Hence we have
$$\begin{array}{l}
    \E[\norm{\theta_{k+1} \!-\! \theta^*}_2^2] \\
    \; \le (1 \!-\! \dfrac{\eta_{k+1} \lambda}{2}) \E[\norm{\theta_k \!-\! \theta^*}_2^2] \!-\! \eta_{k+1} \E[\L_\D(\theta_k) \!-\! \L_\D(\theta^*\!)]\\
    \quad + 2 \eta_{k+1} (\eta_{k+1} \xi^2 + d \sigma^2)
\end{array}$$

Equivalently
$$\begin{array}{l}
    \E[\L_\D(\theta_k) \!-\! \L_\D(\theta^*)] \\
    \; \le - \dfrac{1}{\eta_{k+1}} \E[\norm{\theta_{k+1} \!-\! \theta^*}_2^2] + (\dfrac{1}{\eta_{k+1}} - \dfrac{\lambda}{2}) \E[\norm{\theta_{k} \!-\! \theta^*}_2^2] \\
    \quad + 2 (\eta_{k+1} \xi^2 + d \sigma^2)
\end{array}$$

Plugging in the definition of $\eta_k$ we have
$$\begin{array}{l}
    \E[\L_\D(\theta_k) \!-\! \L_\D(\theta^*)] \\
    \; \le - (2 \beta + \dfrac{\lambda}{2} (k+1)) \E[\norm{\theta_{k+1} \!-\! \theta^*}_2^2] \\
    \quad\:\: + (2 \beta + \dfrac{\lambda}{2} k) \E[\norm{\theta_{k} \!-\! \theta^*}_2^2] \qquad\qquad \\
    \quad\:\: + 2 (\eta_{k+1} \xi^2 + d \sigma^2)
\end{array}$$

We derive the average empirical risk
$$\begin{array}{l}
    \E[\frac{1}{K}\sum_{k=0}^{K-1} \L_{\D}(\theta_k\!) \!-\! \L_\D(\theta^*\!)] \\
    \; = \dfrac{1}{K}\sum_{k=0}^{K-1} \E[\L_{\D}(\theta_{k}\!) \!-\! \L_\D(\theta^*\!)] \\
    \; = - \dfrac{1}{K} (2 \beta + \dfrac{\lambda}{2}K) \E[\norm{\theta_{K} \!-\! \theta^*}_2^2] \\
    \quad + \dfrac{2 \beta}{K} \E[\norm{\theta_{0} \!-\! \theta^*}_2^2] + 2 d \sigma^2 + \dfrac{2 \xi^2}{K} \displaystyle{\sum_{k=1}^{K}} \eta_k \\
    \; \le \dfrac{2 \beta}{K} \E[\norm{\theta_{0} \!-\! \theta^*}_2^2] + 2 d \sigma^2  + \dfrac{2 \xi^2}{K} \displaystyle{\sum_{k=1}^{K}} \dfrac{1}{2 \beta + \frac{\lambda k}{2}} \\
    \; \le \dfrac{2 \beta}{K} \E[\norm{\theta_{0} \!-\! \theta^*}_2^2] + 2 d \sigma^2  + \dfrac{2 \xi^2}{K} \displaystyle{\int_{0}^{K}} \dfrac{1}{2 \beta + \frac{\lambda t}{2}} dt \\
     \; \le \dfrac{2 \beta}{K} \E[\norm{\theta_{0} \!-\! \theta^*}_2^2] + 2 d \sigma^2 + \dfrac{4 \xi^2}{K \lambda} \log \left(1 + \dfrac{\lambda K}{4 \beta} \right)
\end{array}$$

\end{proofE}


The term $\smash{\frac{\xi^2}{\lambda}}$ decreases roughly in $\smash{\frac{1}{K} \log(K)}$. The term $d \sigma^2$ still appears as in Lemma \ref{lemma:emp_risk} but without the $\smash{\frac{\beta}{\lambda}}$ factor, that we observe to be quite significant in Section \ref{sec:experiments}.

We then use this lemma to derive the following utility bound under $(\alpha, \varepsilon)$-Rényi differential privacy:

\begin{theoremE}[Utility bound for $(\alpha, \varepsilon)$-Rényi differential privacy with decreasing learning rate][category=th:utility_renyi_decreasing_eta]\label{th:utility_renyi_decreasing_eta}
    Let $\ell(\theta, \vect x)$ be an $L$-Lipschitz, $\lambda$-strongly convex and $\beta$-smooth loss function on closed convex set $\C$, then $\AlgoNoisySGD$ with start parameter $\smash{\theta_0 \sim \proj(\gauss{\frac{2\sigma^2}{\lambda}})}$ and \emph{decreasing} step-size $\smash{\eta_k = \frac{1}{2 \beta + \lambda k / 2}}$, satisfies $(\alpha, \varepsilon)$ Rényi differential privacy and
    \begin{equation*}
        \E[\frac{1}{K}\!\!\sum_{k=1}^K \!\L_{\D}(\theta_k\!) - \L_\D(\theta^*\!)] = O\left(\dfrac{ \alpha d L^2}{\varepsilon \lambda n^2} \right),
    \end{equation*}
    where $\sigma^2$ and $K$ are set as such:
    \begin{equation*}
        \sigma^2 = \dfrac{4 \alpha L^2}{\varepsilon \lambda  n^2}, \quad
        K = \max\! \left(\dfrac{\beta}{\lambda} \dfrac{\varepsilon n^2}{\alpha d}, \dfrac{\lambda}{\beta} \left(\dfrac{\varepsilon n^2}{\alpha d} \right)^2 \right).
    \end{equation*}
\end{theoremE}
\begin{proofE}
Since $\theta_0, \theta^* \in \C$ and $\norm{C}_2 \le \frac{2L}{\lambda}$, we have $\E[\norm{\theta_0 - \theta^*}_2^2] \le \frac{4L^2}{\lambda^2}$. Additionally, by Lipschitzness of the loss we have $\xi^2 \le L^2$.

We can use this and plug the value of $\sigma^2$ to rewrite Lemma \ref{lemma:emp_risk_decreasing_eta}:

\[
\begin{array}{ll}
    \E[\frac{1}{K}\!\!\sum_{k=1}^K \!\L_{\D}\!(\theta_k\!) \!-\! \L_\D\!(\theta^*\!)]\!\!\!\!
    & \!\le\! \dfrac{8 \alpha d L^2}{\varepsilon \lambda  n^2} + \dfrac{8 \beta L^2}{K \lambda^2} \\
    & \quad + \dfrac{L^2 }{\beta} \dfrac{4 \beta}{ K \lambda} \log(1 + \dfrac{K \lambda}{4 \beta}) \\ 
\end{array}
\]

Then, we use the following bound on the logarithmic function from \cite{mitrinovic1970analytic}:
\begin{equation*}
    \forall x > -1, \dfrac{1}{x} \log(1 + x) \le \dfrac{1}{\sqrt{1 + x}}
\end{equation*}
to derive:
\begin{equation*}
    \dfrac{L^2 }{\beta} \dfrac{4 \beta}{ K \lambda} \log(1 + \dfrac{K \lambda}{4 \beta}) \le
    \dfrac{L^2 }{\beta} \dfrac{1}{\sqrt{1 + \dfrac{K \lambda}{4 \beta}}}
\end{equation*}

We can now plug the value of $K$.

\[
\begin{array}{ll}
\E[\frac{1}{K}\!\!\sum_{k=1}^K \!\L_{\D}\!(\theta_k\!) \!-\! \L_\D\!(\theta^*\!)]\!\!\!\!
    & \!\le\! \dfrac{8 \alpha d L^2}{\varepsilon \lambda  n^2} + \dfrac{8 \beta L^2}{K \lambda^2} \\
    & \quad + \dfrac{L^2 }{\beta} \sqrt{\dfrac{4 \beta}{\lambda} \dfrac{\beta}{\lambda} \left(\dfrac{\alpha d}{\varepsilon n^2} \right)^2 } \\ 
    & \!\le\! \dfrac{18 \alpha d L^2}{\varepsilon \lambda  n^2}
\end{array}
\]

\end{proofE}

Compared to previous Theorem \ref{th:utility_renyi}, we improve the utility bound by a factor $\frac{\beta}{\lambda}$ which is non negligible in practice. However, the number of iterations $K$ can now evolve either in $n^2$ or in $n^4$ in the regime where $\smash{\frac{\varepsilon n^2}{\alpha d} > \big( \frac{\beta}{\lambda} \big)^2}$.

\begin{theoremE}[Utility bound for $(\epsilon, \delta)$-differential privacy with decreasing learning rate][category=th:utility_dwork_decreasing_eta]
  With the same conditions as is Theorem \ref{th:utility_renyi_decreasing_eta}, for $\epsilon \le 2 \log(1/\delta)$ and $\delta > 0$, $\AlgoNoisySGD$ satisfies $(\epsilon, \delta)$ differential privacy and
    \begin{equation*}
        \E[ \L_{\D}(\theta_K) - \L_{\D}(\theta^*)] = O\left(\dfrac{ d L^2 \log(1 / \delta)}{\epsilon^2\lambda n^2} \right) ,
    \end{equation*}
    where $\sigma^2$ and $K$ are set as such:
    \begin{align*}
        \sigma^2 &= \dfrac{8 L^2 (\epsilon + 2 \log(1/\delta)}{\epsilon^2 \lambda n^2} \\
        K &= \max \Bigg( 
        \dfrac{\beta}{\lambda}
        \dfrac{ \epsilon^2 n^2}{ 4 \log(1 / \delta) d},
        \dfrac{\lambda}{\beta}
        \left( \dfrac{ \epsilon^2 n^2}{ 4 \log(1 / \delta) d} \right)^2
    \Bigg).
    \end{align*}
\end{theoremE}
\begin{proofE}
By setting $\varepsilon = \frac{\epsilon}{2}$, we derive from Proposition \ref{prop:renyi_to_dp} :
\begin{equation*}
    \alpha = 1 + \frac{2}{\epsilon} \log(1 / \delta)
\end{equation*}.

We use this to rewrite the results from Theorem \ref{th:utility_renyi_decreasing_eta}:
\[
\begin{array}{ll}
    \E[\frac{1}{K}\!\!\sum_{k=1}^K \!\L_{\D}(\theta_k\!) - \L_\D(\theta^*\!)]\!\!\! &\le \dfrac{ 18 \alpha d L^2}{\varepsilon \lambda n^2}  \\
    & = \dfrac{ 18 d L^2}{ \lambda n^2} \dfrac{1 + \frac{2}{\epsilon} \log(1 / \delta)}{\frac{\epsilon}{2}}  \\ 
    & = \dfrac{ 18 d L^2}{ \lambda n^2} \dfrac{2 \epsilon + 4 \log(1 / \delta)}{\epsilon^2} \\
    & \le \dfrac{ 18 d L^2}{ \lambda n^2} \dfrac{8 \log(1 / \delta)}{\epsilon^2} \\
    & \quad \mathrm{using} \: \epsilon \le 2 \log(1/\delta)
\end{array}
\]
Similarly,
\[
\begin{array}{ll}
    \sigma^2\! &= \dfrac{4 L^2 \alpha}{\lambda n^2 \varepsilon} \\
    &= \dfrac{4 L^2}{\lambda n^2}  \dfrac{1 + \frac{2}{\epsilon} \log(1 / \delta)}{\frac{\epsilon}{2}} \\
    &= \dfrac{8 L^2 (\epsilon + 2 \log(1/\delta)}{\epsilon^2 \lambda n^2}
\end{array}
\]
\[
\begin{array}{ll}
    \dfrac{\epsilon n^2}{\alpha d}\! &= \dfrac{n^2}{ d} \dfrac{\frac{\epsilon}{2}}{1 + \frac{2}{\epsilon} \log(1 / \delta) } \\
    &= \dfrac{n^2}{ d} \dfrac{\epsilon^2}{ 2 \epsilon + 4 \log(1 / \delta) } \\
    &\le \dfrac{ \epsilon^2 n^2}{ 4 \log(1 / \delta) d} \\
\end{array}
\]
\[
\begin{array}{ll}
    K\! &= \max \left(\dfrac{\beta}{\lambda} \dfrac{\epsilon n^2}{\alpha d}, \dfrac{\lambda}{\beta} \left(\dfrac{\epsilon n^2}{\alpha d} \right)^2 \right) \\
    &\le \max \left( 
        \dfrac{\beta}{\lambda}
        \dfrac{ \epsilon^2 n^2}{ 4 \log(1 / \delta) d},
        \dfrac{\lambda}{\beta}
        \left( \dfrac{ \epsilon^2 n^2}{ 4 \log(1 / \delta) d} \right)^2
    \right) \\
\end{array}
\]
\end{proofE}

\section{Experiments: application to logistic regression}\label{sec:experiments}

We now propose an experimental evaluation of DP-SGLD and compare it to DP-SGD on a classification task using logistic regression on two vision datasets, CIFAR10 and Pneumonia, a dataset of chest X-ray images of pediatric pneumonia published by \citet{kermany2018identifying}. Details about the datasets and models are available in Appendix \ref{app:models_datasets}.

The task consists of pre-training a model (here AlexNet or ResNet18) on a dataset that will be considered \emph{public} (here we take CIFAR100 and Imagenet).
Then, all layers of the model are freezed except for the last one which is retrained from scratch using a softmax loss function on a \emph{private} dataset (here CIFAR10 or Pneumonia). This corresponds to logistic regression and some regularization is added to guarantee strong convexity. Pre-training provides generic feature maps learned on a public dataset which improves the task accuracy without any compromise on the privacy.

First, we formalize this setting and provide the smoothness and convexity constants for logistic regression. 
Second, we report accuracy achieved with DP-SGLD and compare it to DP-SGD for constant and decreasing step size $\eta$. 

\subsection{Smoothness and convexity of logistic regression}

For clarity, we replace the generic parameter $\theta$ with the single matrix $\mat W$ that it represents for logistic regression.

The loss with regularization writes $\ell(\mat W, \vect x) = \log \big( \sigmoid(\mat W \vect x ) \big)_{y} + \lambda \norm{\mat W}_2^2$
where $C$ is the number of classes, $y \in [1, C]$ is the label of sample $\vect x$, $\smash{\mat W \in \R^{C \times p}}$ and $\smash{\sigmoid : \R^C \mapsto \R^C}$ is the sigmoid function (not to be confused with the noise variance $\sigma$):
\[
    \big(\sigmoid(\vect z) \big)_i = \dfrac{e^{z_i}}{\sum_{j = 1}^C e^{z_j}}
    , \: \forall i = 1,..,C \:, \vect z \in \R^C
\]


\begin{lemmaE}[Convexity and smoothness constants for regularized logistic regression][category=lemma:smoothness_cvx_log_reg]
    Let $\ell(\mat W, \vect x)$ be defined as above. Then $\ell$ is $\lambda$-strongly convex and $\beta$-smooth, with
    \begin{align*}
        \beta = \dfrac{1}{2} \eigenmax \Big( \dfrac{1}{n} \sum_{i=1}^{n} \vect x_i \vect x_i^\top \Big) + \lambda
    \end{align*}
    where $\eigenmax$ refers to the maximum eigenvalue. 
\end{lemmaE}

\begin{proofE}
By plugging in the definition of the sigmoid, the loss $\ell$ also writes:
\begin{equation*}
    \ell(\mat W, \vect x) = \log \dfrac{\exp \big((\mat W \vect x )_{y}\big)}{ \sum_{i=1}^C \exp \big((\mat W \vect x )_i \big)} + \lambda \norm{\mat W}_2^2,
\end{equation*}
where $\mat W \in \R^{C \times p}$ also writes $[\vect w_1^\top, \dots, \vect w_C^\top]^\top$.

Let's define for i = 1, \dots, C
\begin{align*}
        p_i = P(y_i = 1 | \vect x, \mat W) 
        & = \dfrac{\exp \big((\mat W \vect x )_{i}\big)}{ \sum_{c=1}^C \exp \big((\mat W \vect x )_c \big)} \\
        & = \dfrac{\exp (\vect w_i^\top \vect x )}{ \sum_{c=1}^C \exp (\vect w_c^\top \vect x )}
\end{align*}
We can rewrite the loss $\ell$ as such
\begin{equation*}
    \ell(\mat W, \vect x) = \log \prod_{i=1}^C p_i^{y_i} + \lambda \norm{\mat W}_2^2
\end{equation*}
where the label $y$ is now one-hot encoded: $y = (y_0, \dots, y_C)$. 

Following the work of \cite{bohning1992multinomial}, we have:
\begin{equation*}
    \nabla^2 \ell(\mat W, \vect x) = (\mat D(\vect p) - \vect p \vect p^\top) \otimes \vect x \vect x^\top + \lambda \mat I_C
\end{equation*}
where $\mat D(\vect p) = \mat I_C \vect p$.

We derive
\begin{equation*}
    \nabla^2 \L_\D(\mat W) = \dfrac{1}{n} \sum_{j=1}^{n} (\mat D(\vect p_j) - \vect p_j \vect p_j^\top) \otimes \vect x_j \vect x_j^\top  + \lambda \mat I_C
\end{equation*}
where $\vect p_j$ corresponds to $\vect p$ conditioned with $\vect x_j$.

As shown in \cite{krishnapuram2005sparse}, $\nabla^2 \L_\D(\mat W)$ satisfies the following
\begin{equation*}
    \lambda \mat I_C \preceq \nabla^2 \L_\D(\mat W) \preceq \dfrac{1}{2} \big(\mat I_C - \dfrac{1}{C} \mat 1_C \mat 1_C^\top \big) \otimes \dfrac{1}{n} \mat X \mat X^\top + \lambda \mat I_C
\end{equation*}
where $\mat X = [\vect x_1, \dots, \vect x_n]^\top \in \R^{n \times d}$.
In particular, we deduce:
\begin{align*}
    \beta &= \eigenmax(\nabla^2 \L_\D(\mat W)) \\
    & \le \eigenmax \left(\dfrac{1}{2} \big(\mat I_C - \dfrac{1}{C} \mat 1_C \mat 1_C^\top \big) \otimes \dfrac{1}{n} \mat X \mat X^\top + \right) + \lambda \\
    & = \max_{\lambda_{\mathsf{eig}}, \lambda_{\mathsf{eig}}'} \lambda_{\mathsf{eig}} \left( \dfrac{1}{2} \big(\mat I_C - \dfrac{1}{C} \mat 1_C \mat 1_C^\top \big) \right) \lambda'_{\mathsf{eig}} \left( \dfrac{1}{n} \mat X \mat X^\top \right) + \lambda \\
    & \le \dfrac{1}{2n} \eigenmax\left( \mat X \mat X^\top \!\right) + \lambda
\end{align*}

\end{proofE}

\vspace{-0.1cm}
\subsection{Experimental utility of logistic regression}


We compare our DP-SGLD algorithm with the standard DP-SGD from \citet{abadi2016deep} implemented in Opacus and with the baseline SGD without DP on several vision tasks.
In particular, we study the case where the step size is constant $\smash{\eta = \frac{1}{2 \beta}}$ and where it is decreasing as follows : $\smash{\eta_k = \frac{1}{2 \beta + \lambda k / 2}}$. To be able to provide somewhat comparable results, all methods (DP-SGLD, DP-SGD and No-DP) use the same step size, number of training epochs, privacy budget $(\epsilon, \delta) = (1.0, 10^{-5})$ when applicable and use no momentum. Other hyperparameters are tuned to provide optimal accuracy for each method and are provided in the source code included in the submission.

Results are given in Table \ref{table:acc} for constant step size and in Table \ref{table:acc_decreasing_stepsize} for decreasing step size. The model indicated is the feature extraction model, which is pre-trained on CIFAR100 when the task is on CIFAR10 and on Imagenet when the task is on Pneumonia. Only its last layer is re-trained using logistic regression. As the tables show, DP-SGLD outperforms standard DP-SGD for the tasks considered and considerably reduces the gap in accuracy compared to SGD without differential privacy. However, such results need to be taken cautiously before drawing conclusions since this task is strongly convex and smooth while DP-SGD also applies to non-convex tasks.

To better understand the effect of clamping DP-SGD to smooth and strongly convex tasks, we repeat the first experiment of Table \ref{table:acc}, but instead of leveraging only the last layer for DP-SGD, we also unfreeze the last and fourth block of the ResNet18 architecture, composed notably of 5 convolutional layers.
Results for DP-SGD and SGD without DP provided in Table \ref{table:acc_finetuning} show that while SGD benefits from this fine-tuning and increases accuracy from 70.7\% to 77.0\%, DP-SGD does not improve and accuracy even decreases marginally from 68.0\% to 67.8\%. Such observation aligns with \citet{dp_alexnet_moment} in the sense that basic models like logistic regression currently are competitive compared to deeper models when trained with differential privacy, which underlines the importance of studying classical tasks like training smooth and strongly convex objectives.

Last, we provide in Table \ref{table:parameter_info} the experimental value of some parameters, in light with comments made after Lemma \ref{lemma:emp_risk_decreasing_eta} about the value of $\frac{\beta}{\lambda}$ and after Theorem \ref{th:utility_renyi_decreasing_eta} about the dependence in $n^2$ or $n^4$ of $K$, depending of the ratio $\smash{\frac{\varepsilon n^2}{\alpha d} / \big( \frac{\beta}{\lambda} \big)^{2}}$. As we show, this ratio is of magnitude $10^{-3}$ or less which shows that the evolution of $K$ is quadratic in $n$.

\begin{table} 
\caption{Accuracy (in \%) of logistic regression using SGD with a constant learning rate. \label{table:acc}}
\begin{center}
\begin{tabular}{l l l c c c}
\hline
 Method & Dataset & Model & \!\!\!\!Epochs\!\!\!\! & $\epsilon$  & Acc.  \\ 
\hline
DP-SGLD\!\! & CIFAR10  & Resnet18 & 30  & 1.0 & 70.3 \\
DP-SGD  & CIFAR10   & Resnet18  & 30 & 1.0 & 68.0 \\ 
No DP & CIFAR10   & Resnet18  & 30 & - & 70.7 \\
\hline
DP-SGLD\!\!  & CIFAR10 & Alexnet & 30 & 1.0 & 57.5 \\
DP-SGD & CIFAR10 & Alexnet & 30 & 1.0 & 56.4 \\
No DP & CIFAR10 & Alexnet & 30 & - & 57.7 \\
\hline
DP-SGLD\!\! & Pneumonia\!\!\!\! & Resnet18 & 50 & 1.0 & 58.8 \\
DP-SGD & Pneumonia\!\!\!\! & Resnet18 & 50 & 1.0 & 58.8 \\
No DP & Pneumonia\!\!\!\! & Resnet18 & 50 & - & 59.3 \\
\hline
\end{tabular}
\end{center}
\end{table}

\begin{table} 
\caption{Accuracy (in \%) of logistic regression using SGD with a decreasing learning rate. \label{table:acc_decreasing_stepsize}}
\begin{center}
\begin{tabular}{l l l c c c}
\hline
 Method & Dataset & Model & \!\!\!\!Epochs\!\!\!\! & $\epsilon$  & Acc.  \\ 
 \hline
 DP-SGLD\!\! & CIFAR10  & Resnet18 & 30  & 1.0 & 70.1 \\
 DP-SGD & CIFAR10 & Resnet18 & 30 & 1.0 & 68.1 \\
 No DP & CIFAR10 & Resnet18 & 30 & - & 70.2 \\
\hline
 DP-SGLD\!\! & CIFAR10 & Alexnet & 30 & 1.0 & 57.3 \\
 DP-SGD & CIFAR10 & Alexnet & 30 & 1.0 & 56.4 \\
 No DP & CIFAR10 & Alexnet & 30 & - & 57.6 \\
\hline
DP-SGLD\!\! & Pneumonia\!\!\!\! & Resnet18 & 50 & 1.0 & 58.8 \\
DP-SGD & Pneumonia\!\!\!\! & Resnet18 & 50 & 1.0 & 58.8 \\
No DP & Pneumonia\!\!\!\! & Resnet18 & 50 & - & 59.3 \\
\hline
\end{tabular}
\end{center}
\end{table}

\begin{table} 
\caption{Accuracy (in \%) when fine-tuning ResNet18. \label{table:acc_finetuning}}
\begin{center}
\begin{tabular}{l l l c c c}
\hline
 Method & Dataset & Model & \!\!\!\!Epochs\!\!\!\! & $\epsilon$  & Acc.  \\ 
\hline
DP-SGD  & CIFAR10   & Resnet18  & 30 & 1.00 & 67.8 \\ 
No DP & CIFAR10   & Resnet18  & 30 & - & 77.0 \\
\hline
\end{tabular}
\end{center}
\end{table}

\begin{table} 
\caption{Value of some parameters used for DP-SGLD.}\label{table:parameter_info}
\begin{center}
\begin{tabular}{l l c c c}
\hline
  Dataset & Model & $\beta$ & $\frac{\beta}{\lambda}$ & $\textcolor{white}{\Big|} \frac{\varepsilon n^2}{\alpha d} \big / \! \big( \frac{\beta}{\lambda} \big)^2$  \\ 
 \hline
 CIFAR10  & Resnet18 & 55 & 5.5 $\times 10^{4}$ & 3.2 $\times  10^{-3}$ \\
 CIFAR10 & Alexnet & 259 & 2.6 $\times 10^{5}$ & 1.5 $\times  10^{-4}$ \\
 Pneumonia & Resnet18 & 354 & 7.1 $\times 10^{4}$ & 6.8 $\times 10^{-5}$ \\
\hline
\end{tabular}
\end{center}
\end{table}

\vspace{-0.1cm}
\section{Conclusion}

We have extended the theoretical framework of \citet{chourasia2021differential} to provide a differential privacy analysis of noisy stochastic gradient descent based on Langevin diffusion (DP-SGLD) with arbitrary step size. 
Although our experiments already show the practical utility of our results, relaxing the smoothness and strong convexity hypothesis remains an open challenge and would pave the way for wide adoption by data scientists.

\section*{Acknowledgments}

We would like to thank Pierre Tholoniat for the helpful discussions throughout this project. We are also grateful for the long-standing support of the OpenMined community and in particular its dedicated cryptography team.

This work was supported in part by the French-German Project CRYPTO4GRAPH-AI and by PRAIRIE, the PaRis Artificial Intelligence Research InstitutE.

\newpage

\bibliography{references}
\bibliographystyle{lib/icml2021}

\newpage

\appendix

\section{(Proofs) Privacy analysis of noisy stochastic gradient descent}\label{app:privacy_SGLD}

\printProofs[lemma:sensitivity]

\printProofs[th:rdp_noisysgd_clsi]

\section{(Proofs) Utility analysis for noisy stochastic gradient descent}\label{app:utility_SGLD}

\subsection{(Proofs) Fixed learning rate $\eta$}\label{app:utility_SGLD_fixed_step}

\printProofs[lemma:emp_risk]

\printProofs[th:utility_renyi]

\printProofs[th:utility_dwork]

\subsection{(Proofs) Decreasing step size $\eta_k$}\label{app:utility_SGLD_decreasing_step}

\printProofs[lemma:emp_risk_decreasing_eta]

\printProofs[th:utility_renyi_decreasing_eta]

\printProofs[th:utility_dwork_decreasing_eta]

\section{(Proofs) Experiments: application to logistic regression}\label{app:experiments}

\printProofs[lemma:smoothness_cvx_log_reg]


\section{Datasets and models}\label{app:models_datasets}

\subsection*{Datasets}\label{app:datasets}

We have selected for our experiments two datasets commonly used for training image classification models: CIFAR-10 and CIFAR-100 \cite{krizhevsky2014cifar}, and also a dataset with healthcare data which can more closely mimic a scenario where we care about training a model on sensitive data: Pneumonia \cite{kermany2018identifying}. 

\textbf{CIFAR} 
CIFAR-10 and CIFAR-100 \cite{krizhevsky2014cifar} both consist of 50,000 images in the training set and 10,000 in the test set. They are respectively composed of 10 and 100
different balanced classes (such as airplanes, dogs, horses, etc.) and each image
consists of a colored 32$\times$32 image. The datasets are disjoints, which allows us to pretrain our models AlexNet and Resnet18 on CIFAR-100 and consider it \emph{public} pre-training before performing logistic regression on CIFAR10.

\textbf{Pneumonia}
Pneumonia is a dataset of chest X-ray images of pediatric pneumonia that was published by \cite{kermany2018identifying}. It is composed of 5163 training and 624 test non-colored images of varying sizes. Images are divided in 3 classes: bacterial (26\%), normal (48\%) and viral (26\%). It provides an interesting use case as it is a relatively small dataset and is composed of healthcare data.

\subsection*{Models}\label{appendix:NNarchi}

We have selected 2 models for our experimentations. 
    
\textbf{AlexNet}
AlexNet is the famous winner of the 2012 ImageNet ILSVRC-2012 competition \cite{krizhevsky2012alexnet}. It has 5 convolutional layers and 3 fully connected layers and it can use batch normalization layers for stability and efficient training.
    
\textbf{ResNet18}
ResNet18 \cite{he2016resnet} is the runner-up of the ILSVRC-2015 competition. It is a convolutional neural network that is 18 layers deep, and has 11.7M parameters. It uses batch normalisation layers, but as only the last layer is retrained with differential privacy, we need not replace those layers with group normalisation.

\end{document}